\pdfoutput=1
\documentclass{article}
\usepackage[utf8]{inputenc} %
\usepackage[T1]{fontenc}    %

\usepackage[bitstream-charter]{mathdesign}
\usepackage{amsmath}
\usepackage[scaled=0.92]{PTSans}

\usepackage[
  paper  = letterpaper,
  left   = 1.65in,
  right  = 1.65in,
  top    = 1.0in,
  bottom = 1.0in,
  ]{geometry}

\usepackage[usenames,dvipsnames,table]{xcolor}
\definecolor{shadecolor}{gray}{0.9}

\usepackage[final,expansion=alltext]{microtype}
\usepackage[english]{babel}
\usepackage[parfill]{parskip}
\usepackage{afterpage}
\usepackage{framed}

{\endMakeFramed}

\usepackage{lineno}

\usepackage{ragged2e}

\newcounter{parcount}

\usepackage{graphicx}
\usepackage{wrapfig}
\usepackage[labelfont=bf,font=footnotesize,width=.9\textwidth]{caption}
\usepackage[format=hang]{subcaption}

\usepackage{booktabs,multirow,multicol}       %

\usepackage[algoruled,algo2e]{algorithm2e}
\usepackage{listings}
\usepackage{fancyvrb}
\fvset{fontsize=\normalsize}

\usepackage[colorlinks,linktoc=all]{hyperref}
\usepackage[all]{hypcap}
\hypersetup{citecolor=MidnightBlue}
\hypersetup{linkcolor=MidnightBlue}
\hypersetup{urlcolor=MidnightBlue}

\usepackage[nameinlink]{cleveref}

\usepackage[acronym,nowarn]{glossaries}
\lstdefinestyle{mystyle}{
    commentstyle=\color{OliveGreen},
    keywordstyle=\color{BurntOrange},
    numberstyle=\tiny\color{black!60},
    stringstyle=\color{MidnightBlue},
    basicstyle=\ttfamily,
    breakatwhitespace=false,
    breaklines=true,
    captionpos=b,
    keepspaces=true,
    numbers=left,
    numbersep=5pt,
    showspaces=false,
    showstringspaces=false,
    showtabs=false,
    tabsize=2
}
\lstset{style=mystyle}

\let\originalMakeUppercase\MakeUppercase
\renewcommand{\MakeUppercase}[1]{#1}
\usepackage[%
minnames=1,maxnames=99,maxcitenames=2,
style=alphabetic,
doi=false,
url=false,
firstinits=true,
hyperref,
natbib,
backend=bibtex,
sorting=nyt
]{biblatex}%

\DeclareFieldFormat{sentencecase}{\MakeSentenceCase*{#1}}

\renewbibmacro*{title}{%
  \ifthenelse{\iffieldundef{title}\AND\iffieldundef{subtitle}}
    {}
    {\ifthenelse{\ifentrytype{article}\OR\ifentrytype{inbook}%
      \OR\ifentrytype{incollection}\OR\ifentrytype{inproceedings}%
      \OR\ifentrytype{inreference}}
      {\printtext[title]{%
        \printfield[sentencecase]{title}%
        \setunit{\subtitlepunct}%
        \printfield[sentencecase]{subtitle}}}%
      {\printtext[title]{%
        \printfield[titlecase]{title}%
        \setunit{\subtitlepunct}%
        \printfield[titlecase]{subtitle}}}%
     \newunit}%
  \printfield{titleaddon}}

\AtEveryBibitem{%
\ifentrytype{article}{
    \clearfield{url}%
    \clearfield{urldate}%
    \clearfield{eprint}
    \clearfield{eid}
}{}
\ifentrytype{book}{
    \clearfield{url}%
    \clearfield{urldate}%
}{}
\ifentrytype{collection}{
    \clearfield{url}%
    \clearfield{urldate}%
}{}
\ifentrytype{incollection}{
    \clearfield{url}%
    \clearfield{urldate}%
}{}
}

\AtEveryBibitem{
    \clearfield{pages}
    \clearfield{review}%
    \clearfield{series}%
    \clearfield{volume}
    \clearfield{month}
    \clearfield{eprint}
    \clearfield{isbn}
    \clearfield{issn}
    \clearlist{location}
    \clearfield{series}
    \clearlist{publisher}
    \clearname{editor}
}{}
\let\MakeUppercase\originalMakeUppercase

\usepackage{etoolbox}
\newtoggle{showproofs}
\toggletrue{showproofs} %

\usepackage{microtype}
\usepackage{graphicx}
\usepackage[labelfont=bf,font=footnotesize]{caption}
\usepackage[format=hang,skip=1pt]{subcaption}
\captionsetup[subfigure]{aboveskip=1pt}

\usepackage[export]{adjustbox}
\usepackage{booktabs} %
\usepackage{algorithmic, algorithm}
\usepackage{wrapfig}

\setlength\belowdisplayskip{0pt}
\setlength\abovedisplayskip{0pt}

\setlength{\textfloatsep}{5pt}
\setlength{\floatsep}{5pt}
\setlength{\intextsep}{5pt}
\setlength{\dbltextfloatsep}{5pt}
\setlength{\dblfloatsep}{5pt}

\usepackage{hyperref}

\usepackage{mathtools}
\usepackage{amsthm}

\usepackage[nameinlink]{cleveref}

\theoremstyle{plain}
\newtheorem{theorem}{Theorem}[section]

\theoremstyle{definition}
\newtheorem{definition}[theorem]{Definition}

\theoremstyle{remark}

\usepackage[textsize=tiny]{todonotes}

\definecolor{WowColor}{rgb}{.75,0,.75}
\definecolor{SubtleColor}{rgb}{0,0,.50}

\newcounter{margincounter}

\newcommand{\defnphrase}[1]{\emph{#1}}

\newcommand{\defeq}{\coloneqq}

\newcommand{\Reals}{\mathbb{R}}

\newcommand{\given}{\mid}

\newcommand{\intd}{\mathrm{d}}

\providecommand\given{} %
\newcommand\SetSymbol[1][]{
  \nonscript\,#1:\nonscript\,\mathopen{}\allowbreak}
\DeclarePairedDelimiterX\Set[1]{\lbrace}{\rbrace}%
{ \renewcommand\given{\SetSymbol[]} #1 }

\addbibresource{bibs/concept.bib}

\usepackage{thm-restate}

\usepackage{bm}
\usepackage{enumitem}
\usepackage[affil-it]{authblk}

\DeclareMathOperator{\repOp}{Rep}

\newcommand{\rep}[1]{\repOp(#1)}

\newcommand{\repSpace}{\mathcal{R}}

\newcommand{\defas}{:=}

\newcommand{\ConceptName}[1]{$\mathtt{#1}$}
\newcommand{\ConceptValue}[1]{\texttt{#1}}

\newcommand{\CausalSep}{causal separability }
\newcommand{\FindSubspaceBasis}{FindSubspaceBasis}
\newcommand{\FindSubspaceMask}{FindSubspaceMask}
\newcommand{\FindSubspaceMethod}{FindSubspaceMethod}

\newcommand{\grad}{\nabla}
\medmuskip=0mu

\begin{document}

\title{Concept Algebra for (Score-Based) Text-Controlled Generative Models}
\date{}
\author[1]{Zihao Wang}
\author[1]{Lin Gui}
\author[2]{Jeffrey Negrea}
\author[1,2,3]{Victor Veitch}

\affil[1]{Department of Statistics, University of Chicago}
\affil[2]{Data Science Institute, University of Chicago}
\affil[3]{Google Research}

\maketitle

\begin{abstract}
This paper concerns the structure of learned representations in text-guided generative models, focusing on score-based models. A key property of such models is that they can compose disparate concepts in a `disentangled' manner.
This suggests these models have internal representations that encode concepts in a `disentangled' manner.
Here, we focus on the idea that concepts are encoded as subspaces of some representation space.  
We formalize what this means, show there's a natural choice for the representation, and develop a simple method for identifying the part of the representation corresponding to a given concept. In particular, this allows us to manipulate the concepts expressed by the model through algebraic manipulation of the representation. We demonstrate the idea with examples using Stable Diffusion. 
Code in \url{https://github.com/zihao12/concept-algebra-code}

\end{abstract}

\section{Introduction}\label{sec:intro}
Large-scale text-controlled generative models are now dominant in many parts of machine learning and artificial intelligence \citep[e.g.,][]{brown2020language,radford2021learning,bommasani2021opportunities,kojima2022large}.
In these models, the user provides a text prompt and the model generates samples based on this prompt---e.g., in large language models the sample is a natural language response, and in text-to-image models the sample is an image.
These models have remarkable abilities to compose disparate concepts to generate coherent samples that were not seen during training. %
This suggests these models have some internal representation of high-level concepts that can be manipulated in a `disentangled' manner.
Broadly, the goal of this paper is to shed light on how this concept representation works, and how it can be manipulated.
We focus on text-to-image diffusion models, though many of the ideas are generally applicable. 

Our starting point is the following commonly observed structure of representations: 
\begin{enumerate}
    \item Each data point $x$ is mapped to some representation vector $\repOp(x) \in \Reals^p$. 
    \item High-level concepts correspond to subspaces (directions) of the representation space.
\end{enumerate}

\begin{figure}[ht]
    \centering
    \begin{subfigure}[b]{0.8\linewidth}
        \centering
        \includegraphics[width=\linewidth]{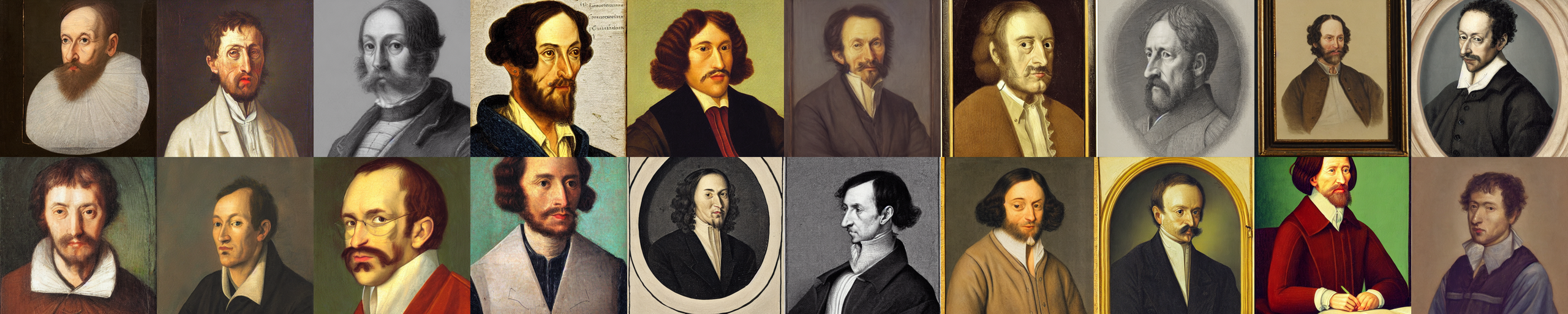}
        \caption{$\repOp[\text{``a portrait of mathematician''}]$}
        \label{fig:mathematician-direct-score}
    \end{subfigure}
    \hfill
    \begin{subfigure}[b]{0.8\linewidth}
        \centering
        \includegraphics[width=\linewidth]{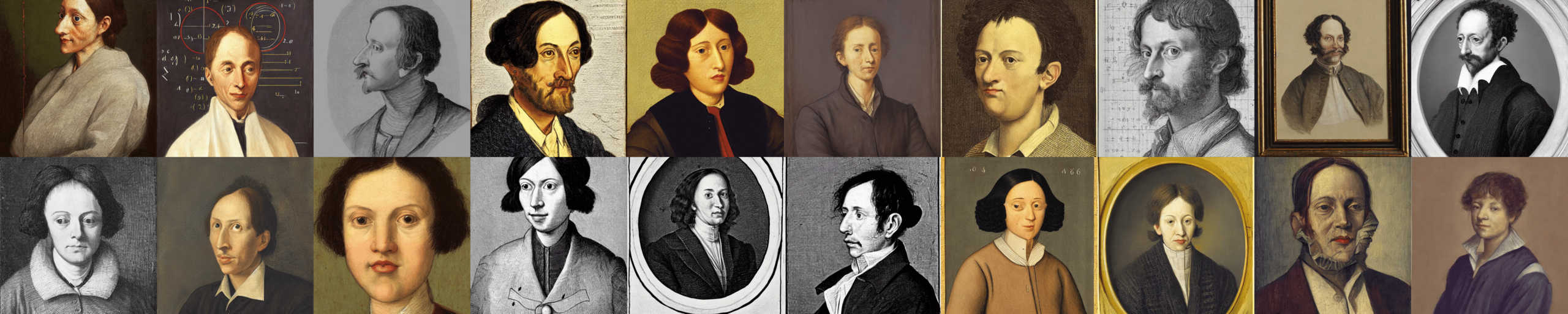}
        \caption{$(\mathbb{I} - \text{proj}_\mathtt{sex}) \repOp[\text{``a portrait of a mathematician''}]+ \text{proj}_\mathtt{sex} \repOp[\text{``a person''}]$}
        \label{fig:mathematician-proj-score-sex}
    \end{subfigure}
    \hfill
    \begin{subfigure}[b]{0.8\linewidth}
        \centering
        \includegraphics[width=\linewidth]{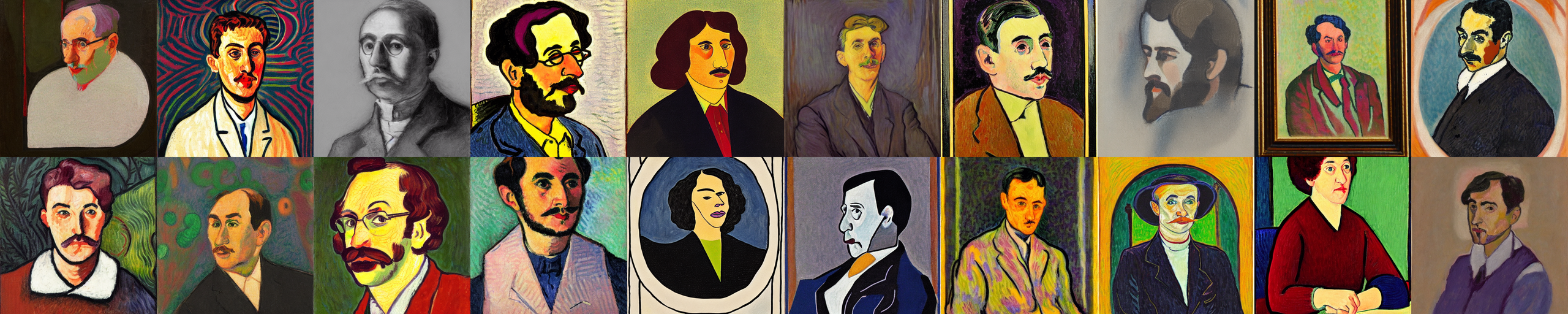}
        \caption{$(\mathbb{I} - \text{proj}_\mathtt{style}) \repOp[\text{``a portrait of a mathematician''}]+ \text{proj}_\mathtt{style} \repOp[\text{``in Fauvism style''}]$} 
        \label{fig:mathematician-proj-score-style}
    \end{subfigure}
\caption{We show that high-level concepts such as \ConceptName{sex} and \ConceptName{artistic\_style} are encoded as subspaces of a suitably chosen representation space. This allows us to manipulate the concepts expressed by a prompt through algebraic operations on the representation of that prompt. Namely, we edit the representation projected on to the subspace corresponding to a concept. Note images are paired by random seed.}
\label{fig:mathematician_with_different_styles}
\end{figure}
Perhaps the best known example of this structure is in word embeddings, where semantic relationships such as $\rep{\text{``king''}} - \rep{\text{``queen''}} \approx \rep{\text{``man''}} - \rep{\text{``woman''}}$ suggest that high-level concepts (here, sex) are encoded as directions in the representation space \citep{mikolov2013distributed}.
This kind of encoding of concepts has been argued to occur in many contexts, including in the latent space of variational autoencoders \citep{zhou2020learning, khemakhem2020variational, mita2021identifiable} and in the latent space of language models \citep{bolukbasi2016man, gonen2019lipstick, radford2021learning, elhage2022toy}.
We'll call representations of this kind \defnphrase{arithmetically composable}, because composition corresponds to 
arithmetic operations on the representation vectors.
The goal of this paper is to develop arithmetically composable representations of text for score-based text to image models.

There are two main motivations. 
First, understanding the structure of the representation space is important for foundational progress on understanding the emergent behavior of text-controlled generative models. 
It is particularly interesting to study this question in the text-to-image setting because the multi-modality of the data makes it straightforward to distinguish concepts from inputs, and because it is not clear a priori that the models themselves build in any inductive bias towards arithmetic structure.
The secondary motivation is that having such a representation would allow us to manipulate the concepts expressed by the model through linear-algebraic manipulations of representation of the input text; \cref{fig:mathematician_with_different_styles} illustrates this idea. %

The development of the paper is as follows:
\begin{enumerate}
    \item We develop a mathematical formalism for describing the connection between representation structures and concepts for text-controlled generative models. 
    \item Using this formalism, we show that the Stein score of the text-conditional distribution is an arithmetically composable representation of the input text.
    \item Then, we develop \emph{concept algebra} as a method for manipulating the concepts expressed by the model through algebraic manipulation of this representation. We illustrate the approach with examples manipulating a variety of concepts.%
\end{enumerate}

\section{A Mathematical Framework for Concepts as Subspaces}\label{sec:formality-prompting}
The first task is to develop a precise mathematical formalism for connecting the structure of representations and high-level concepts. This is necessary for understanding when such representations are possible, how to construct them, and when they may fail.
We must make precise what is a concept, how concepts relate to inputs $x$, and what it means to represent concepts.

\paragraph*{Concepts}
The real-world process that generated the training data has the following structure. First, images $Y$ are generated according to some real-world, physical process. Then, some human looks at each image and writes a caption describing it. 
Inverting this process, each text $x$ induces some probability density $p(y \given x)$ over images $Y$ based on how compatible they are with $x$ as a caption. The (implicit) goal of the generative model is to learn this distribution.

To write the caption, the human first maps the image to a set of high-level variables summarizing the image's content, then uses these latent variables to generate the text $X$.
Let $C$ be the latent variable that captures all the information about the image that is relevant for a human writing a caption. So,
\begin{equation}
    p(y \given X=x) = \int p(y \given C=c) p(C=c \given X=x) \intd{c}.\label{eq:concept-decomposition}
\end{equation}
The random variable $C$ captures the information that is jointly relevant for both the image and caption. Variables in $C$ include attributes such as \ConceptName{has\_mathematician} or \ConceptName{is\_man}, but not \ConceptName{pixel\_14\_is\_red}.
We define concepts in terms of the latent $C$.
\begin{definition}
        A \defnphrase{concept variable} $Z$ is a $C$-measurable random variable. The \defnphrase{concept} $\mathcal{Z}$ associated to $Z$ is the sample space of $Z$.
\end{definition}

Now, the full set of all possible concepts is unwieldy. 
Generally, we are concerned only with the concepts elicited by a particular prompt $x$. 
\begin{definition}
    \label{def:sufficiency}
        A set of concepts $\mathcal{Z}_1, \dots, \mathcal{Z}_k$ is \defnphrase{sufficient for $x$} if $p(c \given X=x, Z_{1:k}=z_{1:k}) = p(c \given Z_{1:k}=z_{1:k})$ for all $z_1, \dots, z_k \in \mathcal{Z}_1 \times \dots \times \mathcal{Z}_k.$
\end{definition}
For example, the concept \ConceptName{profession} would be sufficient for the prompt ``A nurse''. 
This prompt induces a distribution on many concepts (e.g., \ConceptName{background} is likely to be a hospital) but these other concepts are independent of the caption given \ConceptName{profession} $=$ \ConceptValue{nurse}.
Then, %
\begin{equation}
    p(y \given x) = \sum_{z_{1:k}} p(y \given z_{1:k}) p(z_{1:k} \given x). \label{eq:concept-decomposition-sufficient}
\end{equation}

\paragraph*{Concept Distributions}
Following \cref{eq:concept-decomposition-sufficient}, we can view each text $x$ as specifying a distribution $p(z_{1:k} \given x)$ over latent concepts $\mathcal{Z}_1, \dots, \mathcal{Z}_k$. This observation lets us make the relationship between text and concepts precise.
\begin{definition}
    A \defnphrase{concept distribution} $Q$ is a distribution over concepts. Each text $x$ specifies a concept distribution as $Q_x = p(z_{1:k} \given x)$.
\end{definition}
That is, we move from viewing text as expressing specific concept values ($\ConceptValue{is\_mathematician}=1$) to expressing probability distributions over concepts ($Q_x(\ConceptValue{is\_mathematician} = 1) = 0.99$). %
The probabilistic view is more general---deterministically expressed concepts can be represented as degenerate distributions. 
This extra generality is necessary: for example, the prompt ``a person'' induces a non-degenerate distribution over the \ConceptName{sex} concept.

\paragraph*{Concept Representations}
A text-controlled generative model takes in prompt text $x$ and produces a random output $Y$. 
Implicitly, such models are maps from text strings $x$ to the space of probability densities over $Y$.
We'll define a representation $\repOp(x) \in \repSpace$ of $x$ as any function of $x$ that suffices to specify the output distribution. 
We define $f_{r}(\cdot)$ as the density defined by $r \in \repSpace$, and assume that the model learn's the true data distribution of $Y|X=x$:
\begin{equation}
    f_{\repOp(x)}(y) = p(y \given x).\label{eq:rep-def}
\end{equation}

The key idea for connecting representations and concepts is to move from considering representations of prompts to representations of concept distributions.
\begin{definition}
\label{def:concept_rep}
    A \defnphrase{concept representation} $\repOp$ is a function that maps a concept distribution $Q$ to a representation $\rep{Q} \in \repSpace$, where $\repSpace$ is a vector space. The \defnphrase{representation of a prompt $x$} is the representation of the associated concept distribution, $\rep{x} \defeq \rep{Q_x}$.
\end{definition}
There are two reasons why this view is desirable. First, defining the representation in terms of the concept distribution makes the role of concepts explicit---this will allow us to explain how representation structure relates to concept structure. 

The second reason is that it allows us to reason about representations that do not correspond to any prompt. Every prompt defines a concept distribution, but not every concept distribution can be defined by a prompt.
This matters because we ultimately want to reason about the conceptual meaning of representation vectors created by algebraic operations on representations of prompts. Such vectors need not correspond to any prompt.

\paragraph*{Arithmetic Compositionality}
We now have the tools to define what it means for a representation to be arithmetically composable. 
We define composability for a pair of concepts $\mathcal{Z}$ and $\mathcal{W}$.
In the subsequent development, our aim will be to manipulate $\mathcal{Z}$ while leaving $\mathcal{W}$ fixed.
\begin{definition}
    \label{def:arithmetically_composable}
        A representation $\repOp$ is \defnphrase{arithmetically composable} with respect to concepts $\mathcal{Z}, \mathcal{W}$ if there are vector spaces $\repSpace_Z$ and $\repSpace_W$ such that
        for all concept distributions of the form $Q(z,w) = Q_Z(z) Q_W(w)$, 
        \begin{equation}
            \rep{Q_Z Q_W} =  \repOp_Z(Q_Z) + \repOp_W(Q_W),
        \end{equation}
        where $\repOp_Z(Q_Z) \in \repSpace_Z$ and $\repOp_W(Q_W) \in \repSpace_W$. 
    \end{definition}
In words: we restrict to product distributions to capture the requirement that the concepts $\mathcal{Z}$ and $\mathcal{W}$ can be manipulated freely of each other (the typical case is that one or both of $Q_Z$ and $Q_W$ are degenerate, putting all their mass on a single point).
Then, the definition requires that there are fixed subspaces corresponding to each concept in the sense that, e.g., changing only $Q_Z$ induces a change only in $\repSpace_Z$.

\section{The Score Representation}
We now have an abstract definition of arithmetically composable representation.
The next step is to find a specific representation function that satisfies the definition.

We will study the following choice. 
\begin{definition}
    The \defnphrase{score representation} $s[Q]$ of a concept distribution $Q$ is defined by:
    \begin{equation*}
        s[Q](y) \defeq \grad_y \log \int p(y \given z,w) Q(z,w) \intd{z}\intd{w}.
    \end{equation*}
    The \defnphrase{centered score representation} $\bar{s}[Q]$ is defined by $\bar{s}[Q] \defeq s[Q] - s[Q_0]$. 
\end{definition}
Here, $s[Q]$ is itself a function of $y$ and the representation space $\repSpace$ is a vector space of functions.
This is a departure from the typical view of representations as elements of $\Reals^p$.
The score representation can be thought of as a kind of non-parametric representation vector. The centered score representation just subtracts off
the representation of some baseline distribution $Q_0$.\footnote{The representation space $\repSpace$ is the same for all $Q_0$; the choice is arbitrary. We define $\bar{s}$ to ensure $0$ is an element of $\repSpace$. This is for theoretical convenience; we will see that only $s$ is required in practice.} 

The main motivation for studying the score representation is that
\begin{align*}
    s[x](y) 
        & \defas s[Q_x](y) 
        =  \grad_y \log p(y \given x).
\end{align*}
The importance of this observation is that $\grad_y \log p(y \given x)$ is learnable from data. In fact, this score function is ultimately the basis of many controlled generation models \citep[e.g.,][]{ho2020denoising,ramesh2022hierarchical, saharia2022photorealistic}, because it characterizes the conditional while avoiding the need to compute the normalizing constant \cite{hyvarinen2005estimation,song2019generative}. 
Accordingly, we can readily compute the score representation of prompts in many generative models, without any extra model training.

\paragraph{Causal Separability}
The score representation does not have arithmetically composable structure with respect to every pair of concepts.
The crux of the issue is that concepts are reflected in the representation based on their effect on $Y$. If the way they affect $Y$ depends fundamentally on some interaction between two concepts, the representation cannot hope to disentangle them.
Thus, we must rule out this case.

\begin{definition}
\label{def:causal_separability}
    We say that $Y$ is \defnphrase{causally separable} with respect to $\mathcal{Z}, \mathcal{W}$ if there exist unique $Y$-measurable variables $Y_{\mathcal{Z}}$, $Y_{\mathcal{W}}$, and $\xi$ such that 
    \begin{enumerate}[topsep=-1pt, itemsep=0pt, partopsep=0pt, parsep=0pt]
        \item $Y = g(Y_{\mathcal{Z}}, Y_{\mathcal{W}}, \xi)$ for some invertible and differentiable function $g$, and
        \item $p(y_{\mathcal{Z}}, y_{\mathcal{W}}, \xi \given z, w) = p(\xi)p(y_{\mathcal{Z}} \given z)p(y_{\mathcal{W}} \given w)$
    \end{enumerate}
\end{definition}
Informally, the requirement is that we can separately generate $Y_{\mathcal{Z}}$ and $Y_{\mathcal{W}}$ as the part of the output affected by $\mathcal{Z}$ and ${\mathcal{W}}$(and $\xi$ as the part of the image unrelated to $Z$ and $W$), then combine these parts to form the final image.
That is, generating the visual features associated to a concept $\mathcal{W}$ can't require us to know the value of another concept $\mathcal{Z}$. 
As an example where causal separability fails, consider the concepts of \ConceptName{species} $\mathcal{W}=\{\ConceptValue{deer}, \ConceptValue{human}\}$ and \ConceptName{sex} $\mathcal{Z}=\{\ConceptValue{male}, \ConceptValue{female}\}$. %
It seems reasonable that there is a $Y$-measurable $Y_{\mathcal{W}}$ that is the species part of the image---e.g., the presence of fur vs skin, snouts vs noses, and so forth. However, there is no part of $Y$ that corresponds to a sex concept in a manner that's free of species. The reason is that the visual characteristics of sex are fundamentally different across species---e.g., male deer have antlers, but humans usually do not.
In \cref{fig:nurse-deer-examples} we test this example, finding that concept algebra fails in the absence of causal separability.

It turns out it suffices to rule out this case (all proofs in appendix): 
\begin{restatable}{proposition}{sepImpliesDisentangled}
\label{prop:sepImpliesDisentangled}
    If $Y$ is causally separable with respect to $\mathcal{W}$ and $\mathcal{Z}$, then the centered score representation is arithmetically composable with respect to $\mathcal{W}$ and $\mathcal{Z}$.
\end{restatable}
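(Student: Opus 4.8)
The plan is to push the product concept distribution through the change of variables $y \mapsto u = g^{-1}(y)$ supplied by causal separability, so that the integral defining $s[Q]$ factorizes, and then to differentiate the logarithm and observe that the score splits into a $Q$-independent piece, a piece depending only on $Q_Z$, and a piece depending only on $Q_W$. Concretely, fix a product concept distribution $Q(z,w) = Q_Z(z)Q_W(w)$ and write $u = (y_{\mathcal{Z}}, y_{\mathcal{W}}, \xi) = g^{-1}(y)$; by item~1 of \cref{def:causal_separability} this is a fixed, $Q$-independent diffeomorphism, so the change-of-variables formula gives $p(y \given z,w) = p(u \given z,w)\,\abs{\det \grad g^{-1}(y)}$, and by item~2, $p(u \given z,w) = p(\xi)\,p(y_{\mathcal{Z}} \given z)\,p(y_{\mathcal{W}} \given w)$. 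Integrating against $Q_Z(z)Q_W(w)$ and applying Fubini,
\begin{equation*}
  \int p(y \given z,w)\, Q_Z(z)\, Q_W(w)\, \intd z\, \intd w \;=\; \abs{\det \grad g^{-1}(y)}\; p(\xi)\; p_{Q_Z}(y_{\mathcal{Z}})\; p_{Q_W}(y_{\mathcal{W}}),
\end{equation*}
where $p_{Q_Z}(t) \defeq \int p(t \given z)\, Q_Z(z)\, \intd z$ depends only on $Q_Z$, and similarly for $p_{Q_W}$.

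Next I would take $\log$ of both sides and apply $\grad_y$ by the chain rule, using that $y_{\mathcal{Z}}, y_{\mathcal{W}}, \xi$ are fixed smooth functions of $y$:
\begin{equation*}
  s[Q_Z Q_W](y) \;=\; a(y) + b(y) + c_{Q_Z}(y) + d_{Q_W}(y),
\end{equation*}
where $a(y) \defeq \grad_y \log \abs{\det \grad g^{-1}(y)}$ and $b(y) \defeq \grad_y \log p(\xi)$ carry no dependence on $Q$, while $c_{Q_Z}(y) \defeq \grad_y \log p_{Q_Z}(y_{\mathcal{Z}})$ depends only on $Q_Z$ and $d_{Q_W}(y) \defeq \grad_y \log p_{Q_W}(y_{\mathcal{W}})$ only on $Q_W$; here the Jacobian of $y \mapsto y_{\mathcal{Z}}$ is $Q$-independent, so only the factor $(\grad \log p_{Q_Z})(y_{\mathcal{Z}})$ carries $Q_Z$-dependence. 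Choosing the baseline $Q_0 = Q_Z^0 Q_W^0$ to be a product distribution (the choice of $Q_0$ is immaterial), the terms $a$ and $b$ cancel in $\bar{s}[Q] = s[Q] - s[Q_0]$, leaving $\bar{s}[Q_Z Q_W] = (c_{Q_Z} - c_{Q_Z^0}) + (d_{Q_W} - d_{Q_W^0})$. Setting $\repOp_Z(Q_Z) \defeq c_{Q_Z} - c_{Q_Z^0}$ and $\repOp_W(Q_W) \defeq d_{Q_W} - d_{Q_W^0}$, and taking $\repSpace_Z$ and $\repSpace_W$ to be the linear spans of their respective images inside the ambient function space $\repSpace$, every requirement of \cref{def:arithmetically_composable} is met.

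The computation itself is mechanical; the step that needs care is the change of variables. I would need $g$ to be a genuine diffeomorphism --- in particular that $(Y_{\mathcal{Z}}, Y_{\mathcal{W}}, \xi)$ has the same total dimension as $Y$ and that $g^{-1}$ is differentiable --- so that all the densities above exist and transform as written, and I would want to confirm that $a(y)$ and $b(y)$ are functionals of $g$ and $p(\xi)$ alone with no covert $Q$-dependence, since the whole argument hinges on their cancelling under centering. I would also note that the uniqueness clause in \cref{def:causal_separability} plays no role in this direction: exhibiting one such decomposition is all that is required.
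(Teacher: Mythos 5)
Your proposal is correct and follows essentially the same route as the paper's proof: factorize $p(y\given z,w)$ via the change of variables given by causal separability, integrate against the product $Q_Z Q_W$, take the log-gradient to obtain an additive decomposition into a $Q$-independent term plus terms depending only on $Q_Z$ and $Q_W$, and cancel the shared term by centering. Your explicit remarks about $g$ needing to be a genuine diffeomorphism and the uniqueness clause being unnecessary are sound observations the paper leaves implicit, but they do not change the argument.
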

That is: the (centered) score representation is structured such that concepts correspond to subspaces of the representation space.

\section{Concept Algebra}\label{sec:concept-algebra}
We have established that concepts correspond to subspaces of the representation space. 
We now consider how to manipulate concepts through algebraic operations on representations.

To modify a particular concept $\mathcal{Z}$ we want to modify the representation only on the subspace $\repSpace_Z$ corresponding to $\mathcal{Z}$.
For example, consider changing the style concept to \ConceptValue{Fauvism}.
Intuitively, we want an operation of the form: 
\begin{equation}
   s_{\text{edit}} \leftarrow (\mathbb{I} - \text{proj}_{\mathtt{style}})s[\text{``a portrait of mathematician''}] + \text{proj}_{\mathtt{style}}s[\text{``Fauvism style''}],
\end{equation}
where $\text{proj}_{\mathtt{style}}$ is the projection onto the subspace corresponding to the style concept.
The idea is that the representation of the original prompt $x_\text{orig}$ is unchanged except on the style subspace. On the style subspace, the representation takes on the value elicited by the new prompt $x_\text{new}=\text{``Fauvism style''}$.  

There are two main challenges for putting this intuition into practice. First, because we are working with an infinite dimensional representation, it is unclear how to do the projection.
Second, although we know that some $\repSpace_Z$ exists, we still need a way to determine it explicitly.

\subsection{Concept Manipulation through Projection}
Following \cref{prop:sepImpliesDisentangled}, we have that 
\begin{equation}\label{eq:score-decomp}
    \bar{s}[Q_Z\times Q_W] = \bar{s}_Z[Q_Z] + \bar{s}_W[Q_W],
\end{equation}
for some representation functions $\bar{s}_Z$ and $\bar{s}_W$ with range in $\repSpace_Z$ and $\repSpace_W$ respectively.
We have that the $Z$-representation space is
\begin{equation}\label{eq:repspace}
    \repSpace_Z = \text{span}(\{\bar{s}_Z[Q_Z]: Q_Z \text{ a concept distribution}\}).
\end{equation}
Our goal is to find a projection onto $\repSpace_Z$.

The first obstacle is that $\repSpace_Z$ is a function space, making algebraic operations difficult to define.
The resolution is straightforward.
In practice, score-based models generate samples by running a discretized (stochastic) differential equation forward in time.
These algorithms only require the score function evaluated at the finite set of points.
At each $y$, we have that $\bar{s}(y) \in \mathbb{R}^m$.
Accordingly, by restricting attention to a single value of $y$ at a time, we can use ordinary linear algebra to define the manipulations:
\begin{definition}\label{def:repSpace}
The \defnphrase{$Z$ subspace at $y$} is
\begin{align}
    \repSpace_Z(y) &\defas \text{span}(\{\bar{s}_Z[Q_Z](y): Q_Z \text{ a concept distribution}\})
\end{align}
and the \defnphrase{$Z$-projection at $y$}, denoted $\text{proj}_Z(y)$ is the projection onto this subspace.     
\end{definition}

If we can compute $\text{proj}_Z(y)$ then we can just edit the representation at each point $y$. 
That is, we transform the score function at each point:
\begin{equation}
    \bar{s}_{\text{edit}}(y) \leftarrow (\mathbb{I} - \text{proj}_Z(y))\bar{s}[x_\text{orig}](y) + \text{proj}_{Z}(y)\bar{s}[x_\text{new}](y).
\end{equation}
We then draw samples from the stochastic differential equation defined by $\bar{s}_{\text{edit}}$.

\subsection{Identifying the Concept Subspace}
The remaining obstacle is that we need to explicitly identify $\repSpace_Z(y)$, so that we can compute $\text{proj}_Z(y)$.
The problem is that the function $\bar{s}_Z$ in \cref{eq:score-decomp} is unknown, so we cannot compute $\repSpace_Z(y)$ directly.
Our strategy for estimating the space is based on the following proposition.
\begin{restatable}{proposition}{repspace}
    \label{prop:repspace2}
    Let $Q_W$ be any fixed distribution over the $W$ concept and $Q_Z^0$ be any reference distribution over $Z$.
    Then, assuming \CausalSep for $\mathcal{Z}, \mathcal{W}$,
    \begin{equation}\label{eq:repspace2}
        \repSpace_Z(y) = \text{span}(\{s[Q_Z Q_W](y) - s[Q_Z^0 Q_W](y): Q_Z \text{ a concept distribution}\}).
    \end{equation}        
\end{restatable}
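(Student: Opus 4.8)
The plan is to unpack the definition of $\repSpace_Z(y)$ from \cref{def:repSpace} and show that the span on the right-hand side of \eqref{eq:repspace2} coincides with it, using the decomposition \eqref{eq:score-decomp} guaranteed by \cref{prop:sepImpliesDisentangled}. The key point is that the unknown function $\bar s_Z$ drops out of the difference $s[Q_Z Q_W] - s[Q_Z^0 Q_W]$ in a controlled way. First I would recall that $\bar s[Q] = s[Q] - s[Q_0]$, so differences of the form $s[Q_Z Q_W](y) - s[Q_Z^0 Q_W](y)$ equal $\bar s[Q_Z Q_W](y) - \bar s[Q_Z^0 Q_W](y)$ — the baseline $s[Q_0](y)$ cancels. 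Then applying \eqref{eq:score-decomp} to each product distribution, the $W$-part $\bar s_W[Q_W](y)$ is common to both terms and also cancels, leaving $\bar s_Z[Q_Z](y) - \bar s_Z[Q_Z^0](y)$. Hence each generator of the right-hand span in \eqref{eq:repspace2} lies in $\text{span}(\{\bar s_Z[Q_Z](y) : Q_Z\}) = \repSpace_Z(y)$, giving the inclusion $\subseteq$.

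For the reverse inclusion $\supseteq$, I would argue that every $\bar s_Z[Q_Z](y)$ is itself in the right-hand span up to the single fixed vector $\bar s_Z[Q_Z^0](y)$: indeed $\bar s_Z[Q_Z](y) = \big(s[Q_Z Q_W](y) - s[Q_Z^0 Q_W](y)\big) + \bar s_Z[Q_Z^0](y)$. The remaining work is to show that $\bar s_Z[Q_Z^0](y)$ also lies in the right-hand span, which follows because we may take $Q_Z = Q_Z^0$ in \eqref{eq:repspace2} only to get $0$ — so instead I would note that $\repSpace_Z(y)$ is by definition a span, hence a linear subspace, and argue that the right-hand set already spans it: taking two different choices $Q_Z, Q_Z'$ and subtracting, $(s[Q_Z Q_W] - s[Q_Z^0 Q_W]) - (s[Q_Z' Q_W] - s[Q_Z^0 Q_W]) = \bar s_Z[Q_Z](y) - \bar s_Z[Q_Z'](y)$, and since $\repSpace_Z(y)$ is spanned by the $\bar s_Z[Q_Z](y)$, it is equally spanned by their pairwise differences together with any one of them; a clean way to close this is to observe that WLOG $0 \in \{\bar s_Z[Q_Z](y)\}$ (the null/reference concept distribution maps to $0$ under $\bar s_Z$ when $Q_Z^0$ is chosen as the centering distribution $Q_0$ restricted to $Z$), so that $\repSpace_Z(y) = \text{span}(\{\bar s_Z[Q_Z](y) - \bar s_Z[Q_Z^0](y)\})$ directly.

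I expect the main obstacle to be the bookkeeping around the centering distribution $Q_0$ and the reference $Q_Z^0$: one must check that the decomposition \eqref{eq:score-decomp} applies to the specific product distributions $Q_Z Q_W$ and $Q_Z^0 Q_W$ appearing here (it does, since both are product distributions, which is exactly the hypothesis under which \cref{prop:sepImpliesDisentangled} yields arithmetic composability), and that the additive constant from re-centering is harmless because $\repSpace_Z(y)$ is a linear span and the difference structure kills any common offset. No delicate analysis of the score or of $g$ is needed beyond what \cref{prop:sepImpliesDisentangled} already provides; the proposition is essentially a linear-algebra identity layered on top of the additive decomposition, made usable by the cancellation of the $W$-component and the baseline.
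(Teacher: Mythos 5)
Your proposal is correct and follows essentially the same route as the paper's proof: use the additive decomposition of the score under causal separability so that the $W$-component and baseline cancel in the difference $s[Q_Z Q_W](y) - s[Q_Z^0 Q_W](y)$, reducing both sides to spans of $s_Z$-differences, and then observe that such a span is invariant to the choice of reference distribution. Your second paragraph reaches this last point somewhat circuitously, but the closing observation (that $\bar{s}_Z$ of the centering marginal is $0$, so the span of the $\bar{s}_Z[Q_Z](y)$ equals the span of their differences from any fixed $\bar{s}_Z[Q_Z^0](y)$) is exactly the baseline-invariance step the paper relies on.
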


The importance of this expression is that it does not require the unknown $s_Z$.

We'll use \cref{eq:repspace2} to identify $\repSpace_Z(y)$. 
The idea is to find a basis for the subspace using prompts $x_0, \dots x_{k}$ that elicit distributions of the form $Q_x = Q_Z Q_W$. 
For example, to identify the \ConceptName{sex} concept we use the prompts $x_0 = \text{``a man''}$ and $x_1 = \text{``a woman''}$, with the idea that 
\begin{equation}
    Q_{x_0} = \delta_{\mathrm{male}} \times Q_W \hspace*{2cm} Q_{x_1} = \delta_{\mathrm{female}} \times Q_W,
\end{equation}
with the same marginal distribution $Q_W$.
We then use the prompts to define the estimated subspace as
\begin{equation}\label{eq:hat-repspace}
    \hat{\repSpace}_Z(y) \defas \text{span}(\{s[x_{i}](y) - s[x_{0}](y): i = 1, \ldots, k\}).
\end{equation}
That is, we write $k+1$ prompts $x_0, \dots x_{k}$ designed so that each elicits a different distribution over $Z$, but the same distribution on $W$. Then, the estimated subspace is given by \cref{eq:hat-repspace}.

\subsection{Concept Algebra}
Summarizing, our approach to algebraically manipulating concepts is:
\begin{enumerate}
    \item Find prompts $x_0, \dots, x_k$ such that each elicits a different distribution on $Z$, but the same distribution on $W$. That is, $Q_{x_j} = Q_Z^j Q_W$ for each $j$.
    \item Construct the estimated representation space $\hat{\repSpace}_Z(y)$ following \cref{eq:hat-repspace}, and define $\text{proj}_Z(y)$ as the projection onto this space.
    \item Sample from the discretized SDE defined by the manipulated score representation\footnote{We can view this as first editing the centered representation $\bar{s}$: $\bar{s}_{\text{edit}}(y) \leftarrow (\mathbb{I} - \text{proj}_Z(y))\bar{s}[x_0](y) + \text{proj}_{Z}(y)\bar{s}[\tilde{x}](y)$. Then add the same baseline on both sides.}
    \begin{equation}
    \label{eq:edit_proj_score}
        s_{\text{edit}}(y) \leftarrow (\mathbb{I} - \text{proj}_Z(y))s[x_\text{orig}](y) + \text{proj}_{Z}(y)s[x_\text{new}](y).
    \end{equation}
\end{enumerate}
Implementation of \cref{eq:edit_proj_score} with the diffusion model is described in \cref{sec:algorithms}.

\section{Validity of Concept Subspace Identification}
The procedure described in the previous section relies on finding spanning prompts $x_0, \dots, x_k$ for the target concept subspace. 
These prompts must satisfy $Q_{x_j} = Q^j_Z Q_W$ for some common $Q_W$, and we must have sufficient prompts to span the subspace.
The first condition is a question of prompt design, and is often not too hard in practice. However, it is natural to wonder when it's possible to actually recover $\repSpace_Z$ using only a practical number of prompts. We give some results showing that the dimension of $\repSpace_Z(y)$ is often small, and thus can be spanned with a small number of prompts. Note that these results rely on the special structure of the score representation, and may not hold for other representations.

First, the case where $\mathcal{Z}$ is categorical with few categories:
\begin{restatable}{proposition}{categoricalRank}
    \label{prop:categoricalRank}
    Assuming \CausalSep holds for $\mathcal{Z}, \mathcal{W}$. 
    If $\mathcal{Z}$ is categorical with $L$ possible values ($L \geq 2$), then $\text{dim}(\mathcal{R}_Z(y)) \leq L - 1$.     
\end{restatable}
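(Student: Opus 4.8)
The plan is to use Proposition~\ref{prop:repspace2}, which tells us that $\repSpace_Z(y)$ is the span of the vectors $s[Q_Z Q_W](y) - s[Q_Z^0 Q_W](y)$ as $Q_Z$ ranges over all concept distributions on $\mathcal{Z}$, holding $Q_W$ fixed at some arbitrary choice. Since $\mathcal{Z}$ is categorical with $L$ values, write $Q_Z = (q_1, \dots, q_L)$ as a point in the probability simplex $\Delta^{L-1}$. The key observation is that the map $Q_Z \mapsto s[Q_Z Q_W](y)$ should be expressible as a function of the mixture density $\int p(y \mid z, w) Q_Z(z) Q_W(w)\,\intd w$ summed over $z$, and this mixture is \emph{affine} in the weights $q_1, \dots, q_L$. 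Specifically, defining $a_z(y) \defeq \int p(y \mid z, w) Q_W(w)\,\intd w$, the mixture density at $(y,Q_Z)$ is $\sum_z q_z a_z(y)$, and the score is $\grad_y \log \sum_z q_z a_z(y)$.

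The main step is then to show that the resulting set of difference vectors lies in a linear subspace of dimension at most $L-1$. The natural approach: fix $y$ and consider the vectors $\grad_y a_z(y) \in \Reals^m$ for $z = 1, \dots, L$ — there are $L$ of them. I claim $s[Q_Z Q_W](y) - s[Q_Z^0 Q_W](y)$ always lies in the span of $\{\grad_y a_z(y) : z=1,\dots,L\}$, because $\grad_y \log \sum_z q_z a_z(y) = \frac{\sum_z q_z \grad_y a_z(y)}{\sum_z q_z a_z(y)}$ is a (scalar-weighted) linear combination of the $\grad_y a_z(y)$. So every score vector, hence every difference, lies in $\mathrm{span}\{\grad_y a_z(y)\}$, which has dimension $\le L$. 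To get down to $L-1$, I use that the differences are of scores, not scores themselves: writing $s[Q_ZQ_W](y) - s[Q_Z^0 Q_W](y) = \sum_z \big(\frac{q_z}{\sum_{z'} q_{z'} a_{z'}(y)} - \frac{q_z^0}{\sum_{z'} q_{z'}^0 a_{z'}(y)}\big)\grad_y a_z(y)$, the vector of coefficients $(c_1, \dots, c_L)$ here satisfies $\sum_z c_z a_z(y) = 1 - 1 = 0$; that is, the coefficient vector always lies in the hyperplane $\{c : \sum_z c_z a_z(y) = 0\}$, an $(L-1)$-dimensional constraint. Hence the span of the differences is contained in the image of an $(L-1)$-dimensional coefficient space under the linear map $(c_z) \mapsto \sum_z c_z \grad_y a_z(y)$, giving $\dim(\repSpace_Z(y)) \le L-1$.

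One point requiring care: Proposition~\ref{prop:repspace2} is stated for general $Q_Z$ and $Q_Z^0$, and uses \CausalSep for $\mathcal{Z},\mathcal{W}$; I would invoke it as a black box to reduce to the spanning-set expression, so the causal separability hypothesis enters only through that citation. The genuinely delicate part is the second reduction, from $L$ to $L-1$ — making sure the ``$\sum_z c_z a_z(y) = 0$'' constraint genuinely cuts the dimension rather than being vacuous (it is not vacuous as long as some $a_z(y) > 0$, which holds since it is an integral of a density). I would also note the bound is tight when the $\grad_y a_z(y)$ are in ``general position'' subject to that one linear constraint, matching the $L=2$ case where one expects a one-dimensional subspace (e.g.\ the \ConceptName{sex} direction spanned by $s[\text{``a man''}] - s[\text{``a woman''}]$). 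I do not expect any other obstacles; the argument is essentially linear algebra once the affine-in-weights structure of the mixture is written down.
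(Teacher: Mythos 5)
Your proposal is correct and follows essentially the same route as the paper's proof: both rest on the observation that the score of a mixture over a categorical $\mathcal{Z}$ is a $y$-dependent linear combination of $L$ fixed vectors whose coefficients satisfy one linear constraint, so the differences appearing in the spanning-set characterization of $\repSpace_Z(y)$ span at most $L-1$ dimensions. The only cosmetic difference is that the paper writes the mixture score as a posterior-weighted average of the component scores $s_Z[\delta_{z_l}](y)$ (weights summing to one) and differences against a baseline to reduce to the $L-1$ vectors $s_Z[\delta_{z_l}](y)-s_Z[\delta_{z_0}](y)$, whereas you work with the gradients $\grad_y a_z(y)$ of the unnormalized component densities and track the constraint $\sum_z c_z a_z(y)=0$ explicitly; the two spanning sets coincide.
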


This result covers concepts such as \ConceptName{sex}, which can be spanned with only two prompts.

The next result extends this to certain categorical concepts with large cardinality, such as \ConceptName{style}. 
The idea is that if a concept is composed of finer grained categorical concepts, each with small cardinality, then the representation space of the concept is also low-dimensional.
For example, \ConceptName{style} may be composed of lower-level concepts such as \ConceptName{color}, \ConceptName{stroke}, \ConceptName{textures}, etc.

\begin{restatable}{proposition}{composedCategoricalRank}
    \label{prop:composedCategoricalRank}
    Suppose $\mathcal{Z}$ is composed of categorical concepts $\{\mathcal{Z}_k\}_{k = 1}^K$ each with the number of categories $L_k$, in the sense that $\mathcal{Z} = \mathcal{Z}_1 \times \dots \mathcal{Z}_k$. 
    Assume $Y$ satisfies \CausalSep with respect to $\mathcal{Z},\mathcal{W}$, with $Y_{\mathcal{Z}}$ the corresponding Y-measurable variable for $\mathcal{Z}$. Further assume that
    there exists $Y_{\mathcal{Z}}$-measurable variables $Y_{\mathcal{Z}_k}$ such that $p(y_{\mathcal{Z}} \given z) = \Pi_{k = 1}^K p(y_{\mathcal{Z}_k} \given z_k)$. Then  
    \begin{equation}
        \text{dim}(\mathcal{R}_Z(y)) \leq \sum_{k = 1}^K (L_k - 1)
    \end{equation}     
\end{restatable}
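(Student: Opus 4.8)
The plan is to reduce the claim to a blockwise application of the mechanism behind \cref{prop:categoricalRank}, now exploiting the extra factorization $p(y_{\mathcal{Z}} \mid z) = \prod_{k=1}^{K} p(y_{\mathcal{Z}_k} \mid z_k)$.

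First I would make $\bar{s}_Z$ explicit using the \CausalSep structure. Writing $(y_{\mathcal{Z}}, y_{\mathcal{W}}, \xi) = g^{-1}(y)$ as in \cref{def:causal_separability}, a change of variables together with the factorization there gives $\log p_{Q_Z Q_W}(y) = \log p_{Q_Z}(y_{\mathcal{Z}}(y)) + \log p_{Q_W}(y_{\mathcal{W}}(y)) + r(y)$, where $p_{Q_Z}(y_{\mathcal{Z}}) \defeq \sum_{z} Q_Z(z)\, p(y_{\mathcal{Z}} \mid z)$ and $r$ gathers the log-Jacobian of $g^{-1}$ and $\log p(\xi(y))$, neither depending on $Q_Z$ or $Q_W$. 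Applying $\grad_y$ and centering at a baseline $Q_Z^0 Q_W^0$, the $r$-term and the $Q_W$-term cancel, so a valid choice for the $\bar{s}_Z$ appearing in \cref{eq:score-decomp} is
\[
  \bar{s}_Z[Q_Z](y) = \grad_y\bigl[\log p_{Q_Z}(y_{\mathcal{Z}}(y)) - \log p_{Q_Z^0}(y_{\mathcal{Z}}(y))\bigr],
\]
and $\repSpace_Z(y)$ is the span of these vectors over all concept distributions $Q_Z$.

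Next I would substitute the product factorization and differentiate. Each $y_{\mathcal{Z}_k} = y_{\mathcal{Z}_k}(y)$ is a fixed differentiable function of $y$; write $J_k(y)$ for its ($Q_Z$-independent) Jacobian, and, at the given $y$, put $a_k^{(l)} \defeq p(y_{\mathcal{Z}_k}(y) \mid z_k = l)$, let $b_k^{(l)}$ be the gradient of $y_{\mathcal{Z}_k} \mapsto p(y_{\mathcal{Z}_k}\mid z_k = l)$ at $y_{\mathcal{Z}_k}(y)$, and set $c_{k,l}(Q) \defeq \sum_{z : z_k = l} Q(z) \prod_{j \neq k} a_j^{(z_j)} \ge 0$ for $l = 1, \dots, L_k$. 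Using $p_{Q_Z}(y_{\mathcal{Z}}) = \sum_z Q_Z(z) \prod_k a_k^{(z_k)}$, a direct computation gives, with $\rho_k(Q) \defeq \bigl(\sum_l c_{k,l}(Q)\, b_k^{(l)}\bigr)\big/\bigl(\sum_l c_{k,l}(Q)\, a_k^{(l)}\bigr)$,
\[
  \bar{s}_Z[Q_Z](y) = \sum_{k=1}^{K} J_k(y)^{\top}\bigl(\rho_k(Q_Z) - \rho_k(Q_Z^0)\bigr).
\]
Each $\rho_k$ is precisely the ratio from the proof of \cref{prop:categoricalRank}, with nonnegative effective weights $c_{k,\cdot}(Q)$ in place of a categorical probability vector: normalizing to $m_l \defeq c_{k,l}(Q)\big/\sum_{l'} c_{k,l'}(Q)\, a_k^{(l')}$ makes $\rho_k(Q) = M_k m$, with $M_k \defeq [\,b_k^{(1)} \mid \cdots \mid b_k^{(L_k)}\,]$ fixed and $m$ on the affine hyperplane $\{m : \langle m, a_k \rangle = 1\}$ of dimension $L_k - 1$. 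Hence $\{\rho_k(Q_Z) - \rho_k(Q_Z^0) : Q_Z\}$ spans a subspace $U_k$ with $\dim U_k \le L_k - 1$, so $\repSpace_Z(y) \subseteq \sum_{k=1}^{K} J_k(y)^{\top} U_k$, which has dimension at most $\sum_{k=1}^{K} (L_k - 1)$.

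I expect essentially all the content to be in the third paragraph, and in particular in the one non-routine observation: the per-block collapse survives even when $Q_Z$ is an \emph{arbitrary} joint law on $\mathcal{Z}_1 \times \cdots \times \mathcal{Z}_K$, because every bit of its cross-coordinate dependence is absorbed into the scalars $c_{k,l}(Q_Z)$, leaving each $\rho_k$ of the same form as the single categorical concept in \cref{prop:categoricalRank}. The remaining ingredients --- the change of variables, exchanging $\grad_y$ with the finite sum over $z$, positivity of $p_{Q_Z}(y_{\mathcal{Z}}(y))$, and the final dimension count for a sum of images of fixed linear maps --- are routine. One point worth pinning down first is exactly what the nested factorization $p(y_{\mathcal{Z}}\mid z) = \prod_k p(y_{\mathcal{Z}_k}\mid z_k)$ asserts about the $Y$-measurable maps $y \mapsto y_{\mathcal{Z}_k}(y)$; once that reading is fixed, the argument proceeds as above.
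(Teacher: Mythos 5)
Your proof is correct, and at the top level it follows the same strategy as the paper's: decompose $s_Z[Q_Z](y)$ into $K$ per-block contributions, bound each block's span by $L_k-1$ via the mechanism of \cref{prop:categoricalRank}, and add the dimensions. The difference is in how the blockwise decomposition is justified, and here your version is genuinely stronger. The paper's proof asserts the identity $s_Z[Q_Z](y) = \sum_{k=1}^K s_{Z_k}[Q_{Z_k}](y)$ ``for any concept distribution $Q_Z$,'' but that identity only holds when $Q_Z$ is a product measure over $\mathcal{Z}_1\times\cdots\times\mathcal{Z}_K$: for a general joint, the mixture $\sum_z Q_Z(z)\prod_k p(y_{\mathcal{Z}_k}\mid z_k)$ does not factor, so its log does not split into a sum over $k$ of terms depending only on the marginals $Q_{Z_k}$. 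Since $\repSpace_Z(y)$ is defined as a span over \emph{all} concept distributions $Q_Z$, including non-product ones, this is a real gap in the paper's argument. Your computation closes it: you show that even for an arbitrary joint $Q_Z$ the score still decomposes as $\sum_k J_k(y)^{\top}\rho_k(Q_Z)$, with all cross-coordinate dependence of $Q_Z$ absorbed into the nonnegative weights $c_{k,l}(Q_Z)$, and the normalization $\langle m, a_k\rangle = 1$ pins each $\rho_k$ to an affine set of dimension $L_k-1$, so differences span at most $L_k-1$ dimensions per block. The conclusion of \cref{prop:composedCategoricalRank} is the same either way, but your route proves it as stated rather than only for the sub-span generated by product distributions. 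One small point to pin down if you write this up fully, which you already flag yourself: the hypothesis $p(y_{\mathcal{Z}}\mid z)=\prod_k p(y_{\mathcal{Z}_k}\mid z_k)$ should be read as saying that $(Y_{\mathcal{Z}_1},\dots,Y_{\mathcal{Z}_K})$ is a differentiable reparametrization of $Y_{\mathcal{Z}}$ (so that the fixed Jacobians $J_k(y)$ exist and the densities compose under the change of variables); with that reading the argument goes through exactly as you describe.
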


\begin{figure}[!htb]
    \centering
    \begin{subfigure}[b]{0.45\linewidth}
        \includegraphics[width=\linewidth]{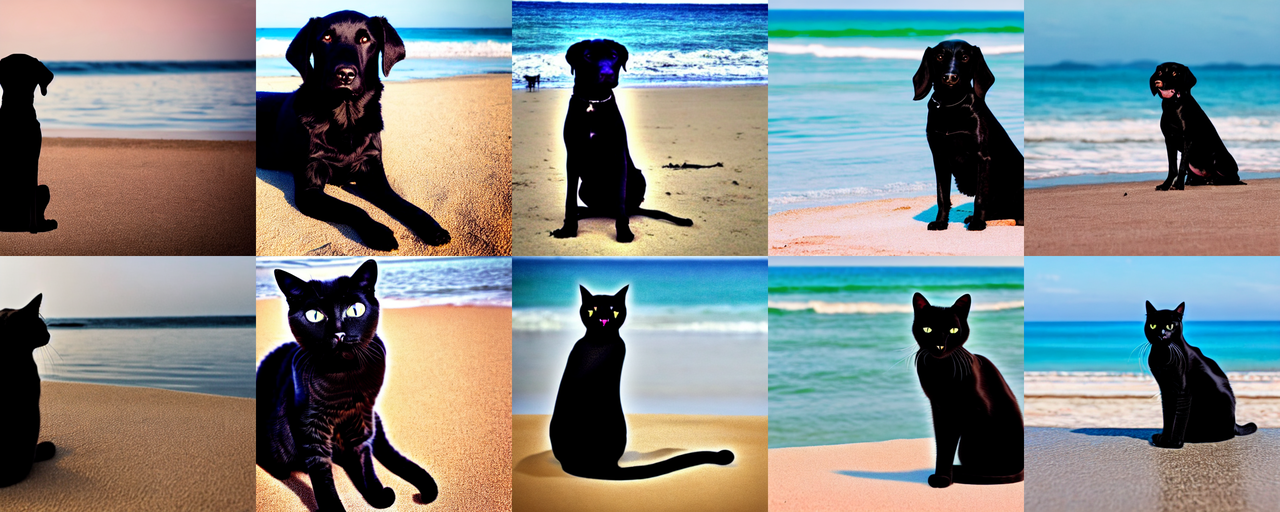}
        \caption{dog vs. cat}
    \end{subfigure}
    \hfill
    \begin{subfigure}[b]{0.45\linewidth}
        \includegraphics[width=\linewidth]{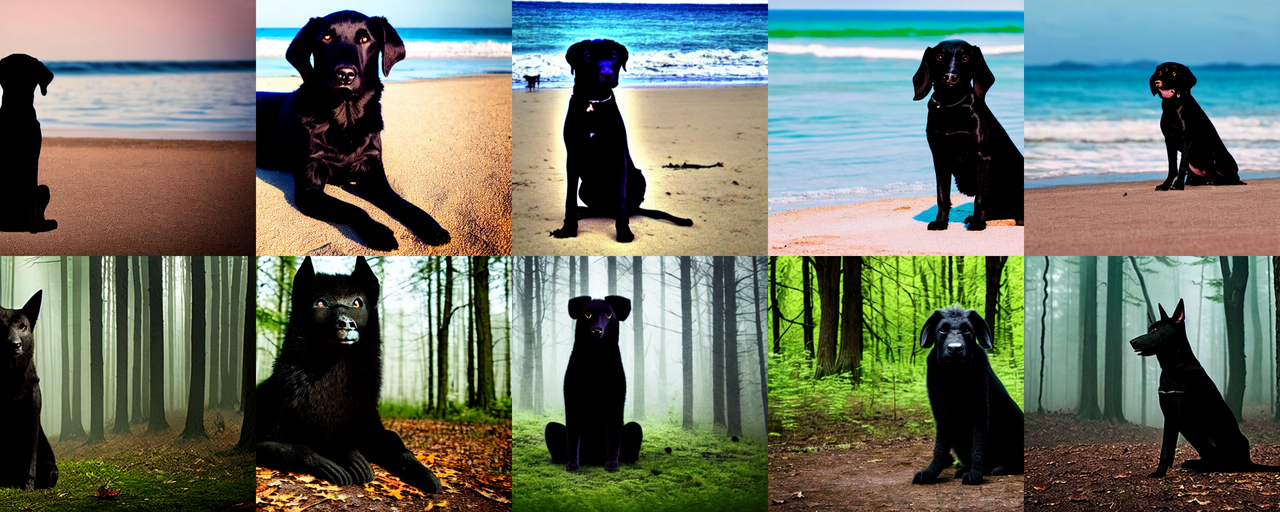}
        \caption{beach vs. forest}
    \end{subfigure}
    \begin{subfigure}[b]{0.45\linewidth}
        \includegraphics[width=\linewidth]{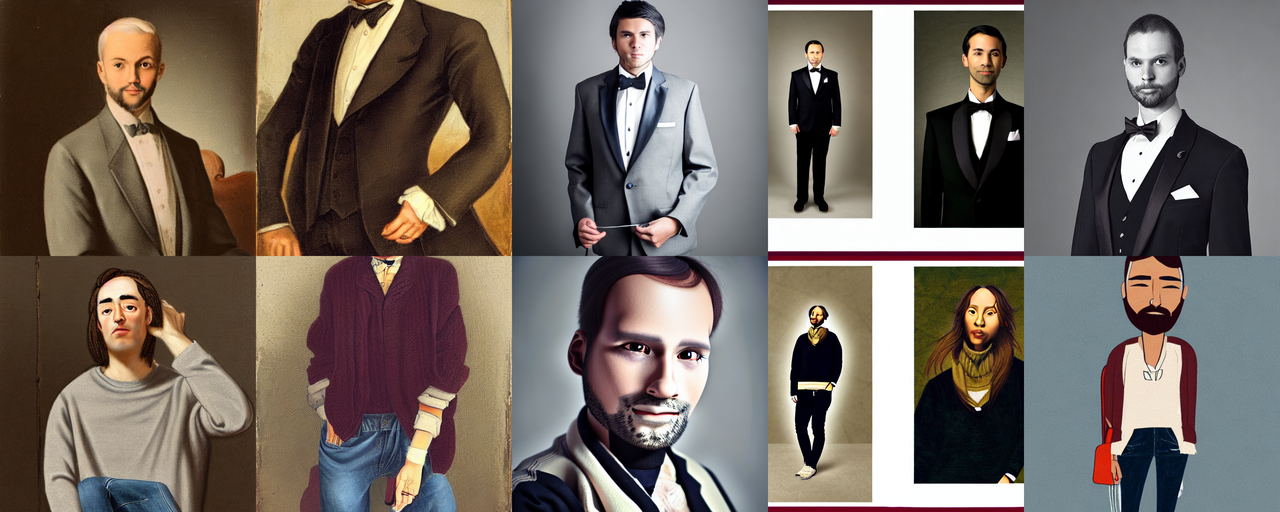}
        \caption{formal clothes vs. casual clothes}
    \end{subfigure}
    \hfill
    \begin{subfigure}[b]{0.45\linewidth}
        \includegraphics[width=\linewidth]{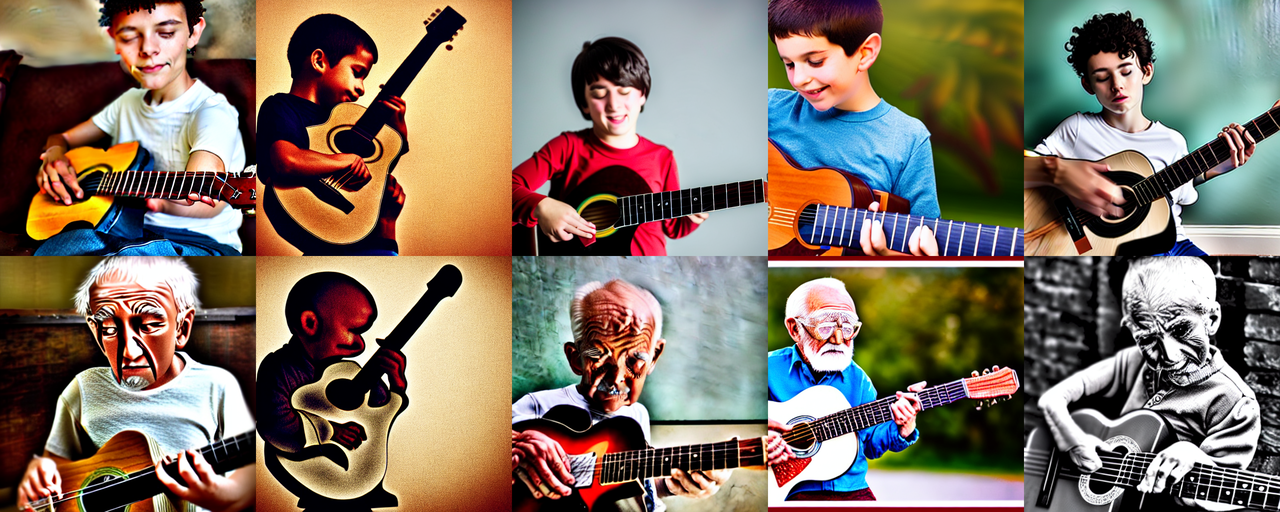}
        \caption{boy vs. old man}
    \end{subfigure}
    \begin{subfigure}[b]{0.45\linewidth}
        \includegraphics[width=\linewidth]{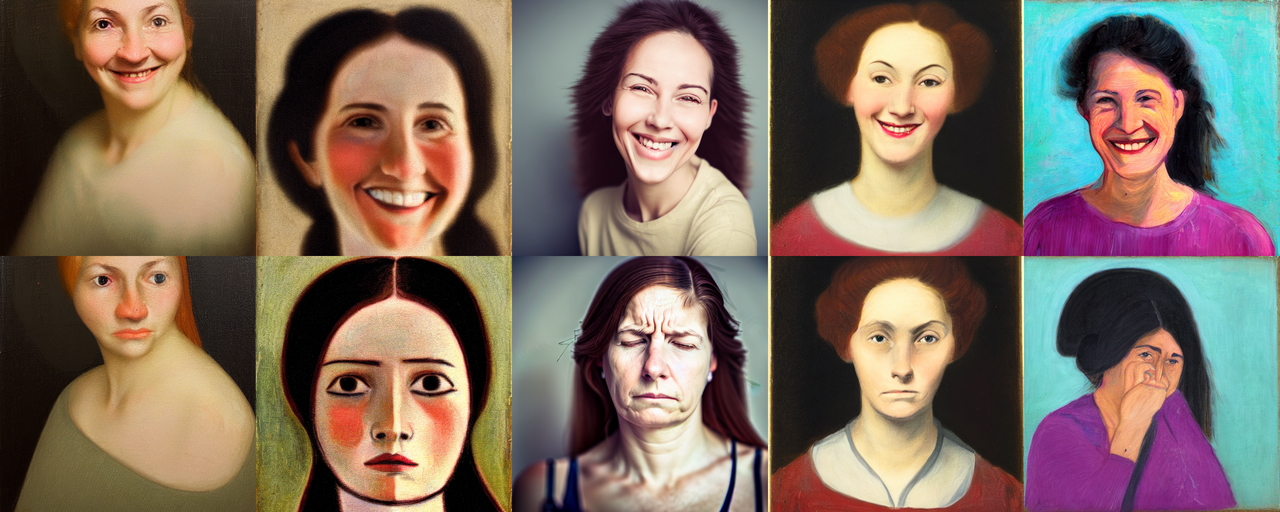}
        \caption{smiling vs. gloomy}
    \end{subfigure}
    \hfill
    \begin{subfigure}[b]{0.45\linewidth}
        \includegraphics[width=\linewidth]{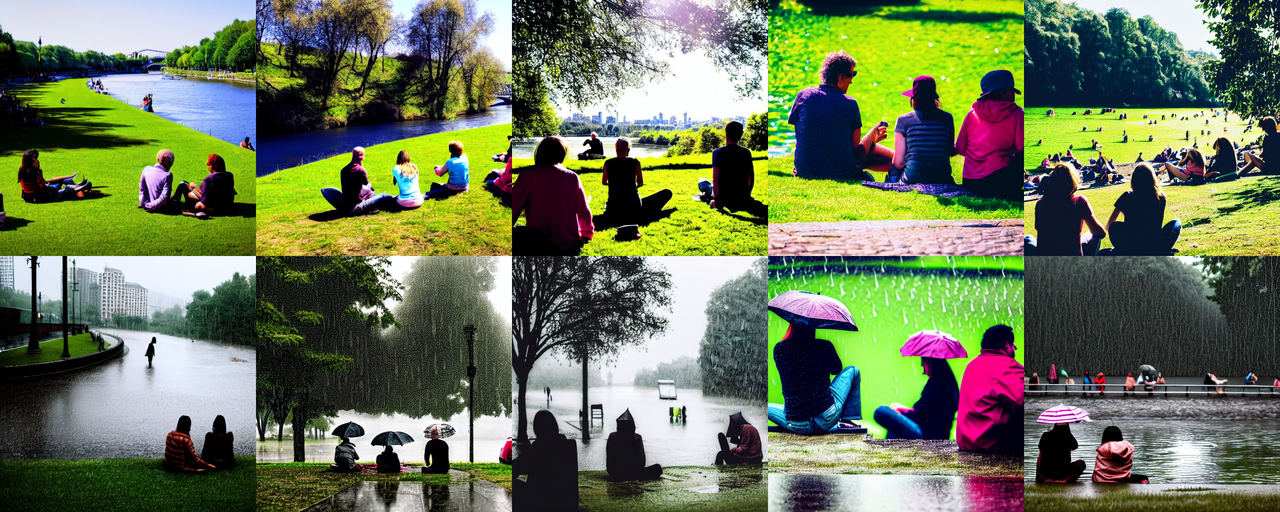}
        \caption{sunny vs. rainy}
    \end{subfigure}
    \hfill
    \caption{Binary concepts (e.g. \ConceptName{dog-vs-cat}, \ConceptName{smiling-vs-gloomy}) correspond to subspaces, and can be easily manipulated with concept algebra.}
    \label{fig:binary-concepts}
\end{figure}

Following this result, we might take the spanning prompts for style to be $x_0 =$ ``a mathematician in Art Deco style'', $x_1 = \text{``a mathematician in Impressionist style''}$, etc. Each of these prompts elicit a fixed distribution $Q_W$ over the content, but varies the distribution $Q_Z$ over style. If style is composed of finer-grained attributes, a relatively small set of such of prompts will suffice. 

\section{Experiments}\label{sec:experiments}
We have formalized what it means for concepts to correspond to subspaces of the representation space, and derived a procedure for identifying and editing the subspaces corresponding to particular concepts in the score representation. 
We now work through some examples testing if this subspace structure does indeed exist, and whether it enables manipulation of concepts.

\paragraph{Many concepts are represented as subspaces}
First, we check whether the subspace structure does indeed exist. To this end, we generate randomly selected prompts---e.g., ``a black dog sitting on the beach''---and attempt to change binary concepts expressed in the prompt. For example, we change the subject to be a cat by manipulating the concept \ConceptName{dog-vs-cat} with concept algebra. 
This, and other examples, are shown in \cref{fig:binary-concepts}. 
It is clear that we are able to manipulate the target concept---providing evidence that these concepts are represented as subspaces.

\paragraph{Concept Algebra can disentangle hard-to-separate concepts}

\begin{figure}[!htb]
    \centering
    \includegraphics[width=\linewidth]{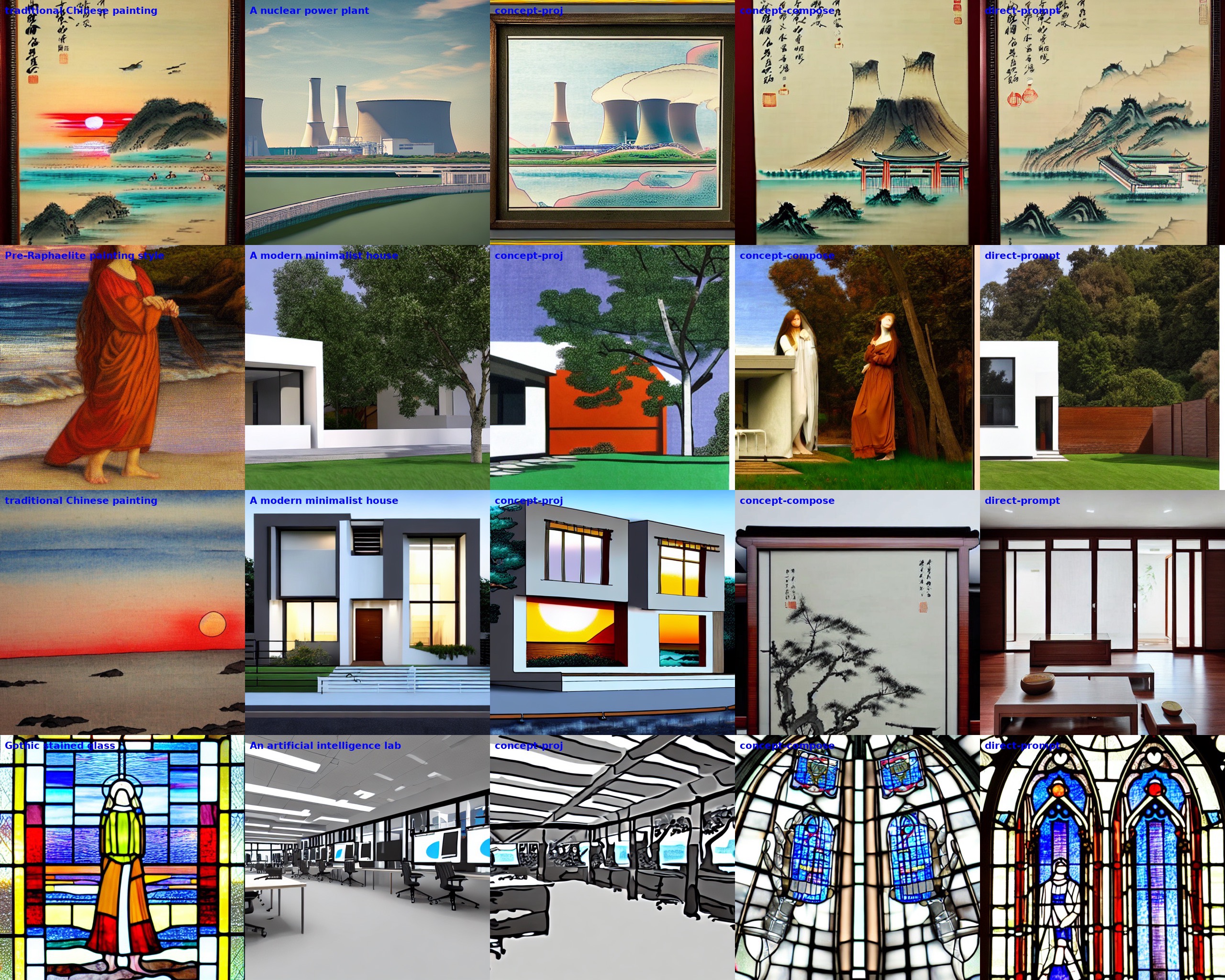}
    \caption{Concept algebra is considered more faithful than other methods. 
    Raters are shown rows of images and rank method outputs by how well they produce target style while preserving content. Each row has style-reference (leftmost), a content-reference (2nd from left), and left-to-right (here, but randomly permuted in the survey) images generated from concept-algebra, composition and direct prompting. Quantative results are in \cref{tab:comparison_inside_figure}.}
    \label{fig:style_comparison}
    
    \vspace{1cm}  %
    
    \begin{minipage}{\linewidth}
        \centering
        
        \begin{tabular}{lcccc}
            \toprule
            & Direct Prompting & Concept Composition & Concept Algebra & All Bad \\
            \midrule
            Average Proportion & 0.162 & 0.164 & \textbf{0.476} & 0.198 \\
            Standard Error     & 0.017 & 0.017 & 0.023 & 0.018 \\
            \bottomrule
        \end{tabular}
        \captionof{table}{Raters find concept algebra is more faithful to content and style than direct prompting or concept composition}
        \label{tab:comparison_inside_figure}
    \end{minipage}
\end{figure}

We stress-test concept algebra by using it to sample images expressing combinations of concepts that occur rarely in the training data.
Specifically, we look at unlikely subject/style combinations---e.g., ``A nuclear power plant in Baroque painting''. 
We accomplish this by taking a base prompt that generates a high-quality image, but with the wrong style (e.g., ``A nuclear power plant'').
Then we use concept algebra to edit the style. 
We compare this with directly prompting the model, and with concept composition \citep[e.g.,][]{du2021unsupervised,liu2021learning}.
The later is a method that adds on a score representation of the style to the base prompt (without projecting on to a target subspace).

We used each of the three methods to sample images expressing to 49 anti-correlated content/style pairs. Human raters were then presented with the outcomes alongside reference images, and asked to rank them based on adherence to the desired style and content. This evaluation was replicated across 10 different raters. Refer to \cref{fig:style_comparison} for illustrative examples. Raters consistently favored images produced by concept algebra, as highlighted in \cref{tab:comparison_inside_figure}. This aligns with our theory, suggesting concept algebra's adeptness in retaining content while altering style. See \cref{sec:experiment_details} for further details.

\paragraph{Mixture concept distributions}
Next, our theory predicts that we can use concept algebra to sample from mixture (non-degenerate) distributions over concept values. 
Consider \ConceptName{sex}. 
\Cref{fig:mathematician-direct-score} shows that $x=\text{``a portrait of a mathematician''}$ almost always generates pictures of men. 
In \cref{fig:mathematician-proj-score-sex} we sample from the distribution induced by
\begin{equation}
    s_{\text{edit}} \leftarrow (\mathbb{I} - \text{proj}_{\mathtt{sex}})s[\text{x}] + \text{proj}_{\mathtt{sex}}s[\text{``person''}],
\end{equation}
and observe that we indeed get a mixture distribution (induced by ``person'') over sex.

\paragraph{Non-prompted edits to the subspace edit the concept}
Concept algebra uses reference prompts (e.g., ``woman'' or ``person'') to set the target concept distribution.
It's natural to ask what happens if we make an edit to a concept subspace that does not correspond to a reference prompt. 
In \cref{fig:portrait_of_nurse}, we sample from
\begin{equation}
    s_{\text{edit}} \leftarrow (\mathbb{I} - \text{proj}_{\mathtt{sex}})s[\text{x}] + \text{proj}_{\mathtt{sex}}(\frac{1}{2}s[\text{``a male nurse''}]+\frac{1}{2}s[\text{``a female nurse''}]).
\end{equation}
The representation vector $\frac{1}{2}s[\text{``a male nurse''}]+\frac{1}{2}s[\text{``a female nurse''}]$ need not correspond to any English prompt.
We observe that modifications on the subspace still affect just the \ConceptName{sex} concept though---the samples are androgynous figures!

\paragraph{Mask as a Concept}
Finally, we consider a more abstract kind of concept motivated by the following problem.
Suppose we have several photographs of a particular toy, and we want to generate an image of this toy in front of the Eiffel Tower.
In principle, we can do this by fine-tuning the model (e.g., with dreambooth) to associate a new token (e.g., ``sks toy'') with the toy. Then, we can generate the image by conditioning on the prompt ``a sks toy in front of the Eiffel Tower''.
In practice, however, this can be difficult because the fine-tuning ends up conflating the toy with the background in the demonstration images. E.g., the prompt ``a sks toy in front of the Eiffel Tower'' tends to generate images featuring carpet; see \cref{fig:sks-toy-dreambooth-direct-score}.

Intuitively, we might hope to fix this problem by finding a concept subspace that excludes background information.  
Given such a ``subject subspace'', we could mask the subject out of the image, generate the background, and then edit the subject back in.
In \cref{sec:dreambooth_details} we explain how to construct such a subspace using the prompts $x_0 = \text{``a toy''}$ and $x_1 = \text{``a soccer ball''}$.
\Cref{fig:sks_toy_eiffel} shows the sampled output. 

\begin{figure}[!htb]
        \centering
        \begin{subfigure}[b]{\linewidth}
            \centering
            \includegraphics[width=\linewidth]{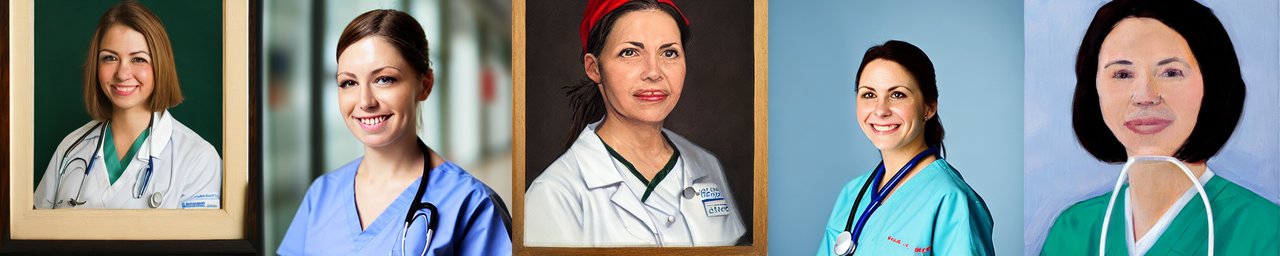}
            \caption{$s[\text{``a portrait of a nurse''}]$}
            \label{fig:nurse-direct-score}
        \end{subfigure}
        \hfill
        \begin{subfigure}[b]{\linewidth}
            \centering
            \includegraphics[width=\linewidth]{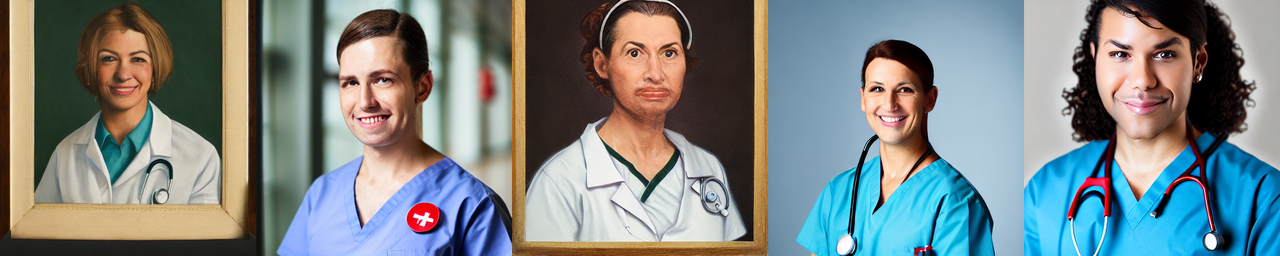}
            \caption{%
                $(\mathbb{I} - \text{proj}_\mathtt{sex}) s[\text{``a portrait of a nurse''}]$ \\
                $+ \text{proj}_\mathtt{sex}(\frac{1}{2}s[\text{``a female nurse''}] + \frac{1}{2}s[\text{``a male nurse''}])$ 
            }
            \label{fig:nurse-half}
        \end{subfigure}
    \caption{Elements of the $\repSpace_Z$ may not correspond to any prompt.}
    \label{fig:portrait_of_nurse}
\end{figure}
    
\section{Discussion and Related Work}\label{sec:related-work}
We introduced a framework illustrating that concepts align with subspaces of a representation space. Through this, we validated the structure of the score representation and derived a method to identify the subspace for a concept. We then demonstrated concept manipulation in a diffusion model's score representation.

\paragraph{Concepts as Subspaces}
There has been significant interest in whether and how neural representations encode high-level concepts.
A substantial body of work suggests that concepts correspond to subspaces of a representation space \citep[e.g.,][]{mikolov2013distributed,mikolov2013linguistic,pennington2014glove,goldberg2014word2vec,arora2015latent,gittens2017skip, allen2019analogies}.
Often, this work focuses on a specific representation learning approach and is either empirical or offers domain-tied theoretical analysis. For instance, in word embeddings, theories explaining observed structures depend on the unique nature of language \citep[e.g.,][]{arora2015latent,allen2019analogies}.
In contrast, our paper's mathematical development is broad—we merely stipulate that the data have two views separated by a semantically meaningful space.
We argue that the concepts-as-subspaces structure stems from the structure of probability theory, independent of any specific architecture or algorithm.

Our work also relates to studies that assume training data arises from a specific latent variable model and demonstrate that learned representations (partially) uncover these latent variables \citep[e.g.,][]{hyvarinen2016unsupervised, hyvarinen2017nonlinear, hyvarinen2019nonlinear, khemakhem2020variational, von2021self, eastwood2022dci, higgins2018towards, zimmermann2021contrastive}.
This literature often aims for "disentangled" representations where each latent space dimension matches a single latent factor.
Unlike these, we don't presuppose a finite set of latent factors driving the data.
Instead, our representations define probability distributions over latent concepts, not merely recovering them.
This non-determinism, as observed, is generally necessary.

\paragraph{Controlling Diffusion Models}

To demonstrate the concept-as-subspace structure, we developed a method for identifying the subspace corresponding to a given concept and showed how to manipulate concepts in the score representation of a diffusion model. We emphasize that our contribution here is not the manipulation procedure itself, but rather the mathematical framework that makes this procedure possible. In particular, the requirement to manipulate entire score functions is somewhat burdensome computationally. However, the ability to precisely manipulate individual concepts is clearly a useful tool, and it is an intriguing direction for future work to develop more efficient procedures for doing so. We conclude by surveying connections to existing work on controlling diffusion models.

One idea has been to take the bottleneck layer of UNet as a representation space and control the model by manipulating this space \cite{kwon2022diffusion, haas2023discovering, park2023unsupervised}.
This work does not consider text controlled models.
It would be intriguing to understand the connection to the score-representation view, as moving from manipulation of the score to manipulation of the bottleneck layer would be a large computational saving.

\begin{figure}[!htb]
    \centering
    \begin{subfigure}[b]{\linewidth}
        \centering
        \includegraphics[width=\linewidth]{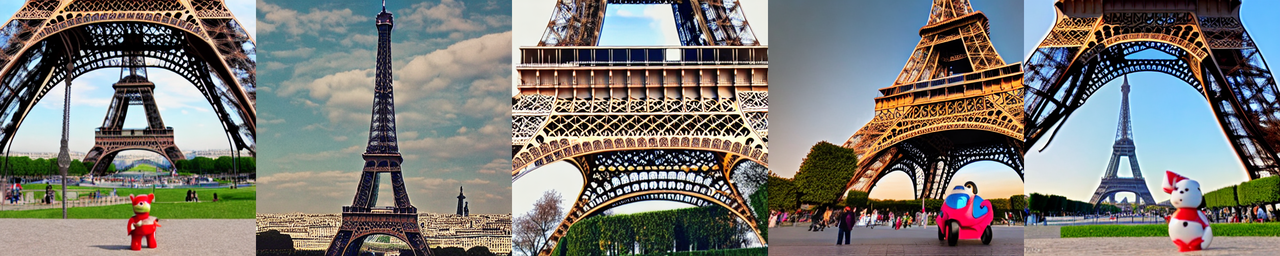}
        \caption{$s[\text{``a toy in front of the Eiffel Tower''}]$}
        \label{fig:sks-toy-dm-direct-score}
    \end{subfigure}
    \hfill
    \begin{subfigure}[b]{\linewidth}
        \centering
        \includegraphics[width=\linewidth]{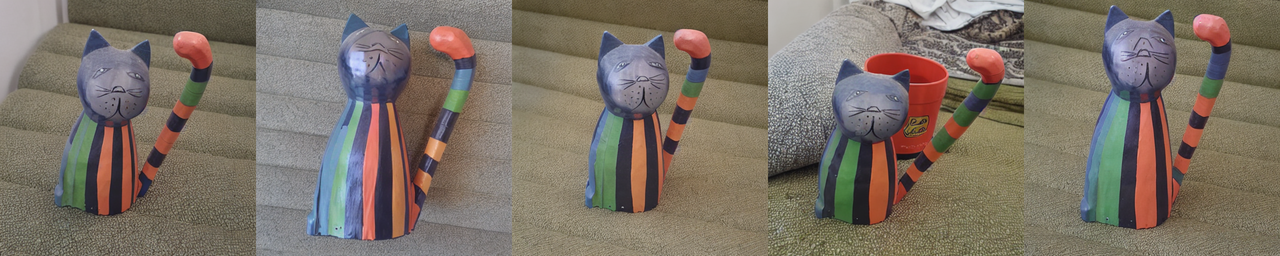}
        \caption{$s_\text{dreambooth}[\text{``a sks toy in front of the Eiffel Tower''}]$}
        \label{fig:sks-toy-dreambooth-direct-score}
    \end{subfigure}
    \hfill
    \begin{subfigure}[b]{\linewidth}
        \centering
        \includegraphics[width=\linewidth]{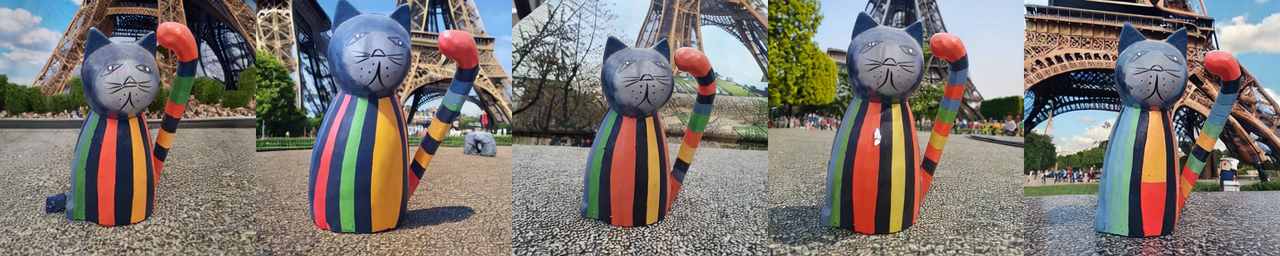}
        \caption{%
            $(\mathbb{I} - \text{proj}_\mathtt{subject}) s[\text{``a toy in front of the Eiffel Tower''}]$ \\
            $+ \text{proj}_\mathtt{subject}s_\text{dreambooth}[\text{``a sks toy in front of the Eiffel Tower''}]$
        }

        \label{fig:sks-toy-proj-score}
    \end{subfigure}
\caption{We can manipulate abstract concepts such as `subject' of the image}
\label{fig:sks_toy_eiffel}
\end{figure}

Concept algebra can be seen as providing a unifying mathematical view on several methods that manipulate the score function \citep[e.g.,][]{du2021unsupervised,liu2021learning,gopalakrishnan2022unite, anonymous2023reduce}. 
\citet{du2020compositional,liu2022compositional} manipulate concepts via adding and subtracting scores. Negative prompting is a widely-used engineering trick that `subtracts off' a prompt expressing unwanted concepts.
In \cref{sec:experiments,sec:additional_experiments} we compared against these heuristics and show that concept algebra is more effective at manipulating concepts in isolation.
\Citet{couairon2022diffedit} use score differences to identify objects' locations in images; this inspired our approach in \cref{sec:experiments,sec:dreambooth_details}.
In each case, we have seen that this kind of manipulation may be viewed as editing the subspace corresponding to some concept.

\newpage
\section*{Acknowledgements}
This work is supported by ONR grant N00014-23-1-2591 and Open Philanthropy.
\printbibliography

\newpage

\appendix

\section{Concept Algebra Algorithms in Diffusion Model}
\label{sec:algorithms}

Text-to-image Diffusion Models use score representations in their generation. 
More specifically, suppose the target is to sample $Y = Y_0 \sim P^*$, with the corresponding score function denoted as $s_0$. 
The key ingredients for generation are the score function for $Y_t$ (denoted as $s_t$),  which is $Y$ noised at different levels, (e.g. $Y_t = (1 - \alpha_t) Y + \alpha_t \epsilon_t$ for standard independent Gaussian noise $\epsilon$), for $t = 0, ..., T$. See \cite{luo2022understanding} for more details. 
To apply our results, we require \CausalSep with respect to $\mathcal{Z}, \mathcal{W}$ holds for all $Y_t, t = 0, ..., T$. Then our theoretical results follow through. 

\Cref{code-proj} is an implementation of Concept Manipulation through Projection based on DDPM \cite{ho2020denoising} (we can also implement different variants). \footnote{Note there here we use residual $\epsilon_{\theta}(y_t, t \given x)$ instead of the score $s_{\theta}(y_t, t \given x)$ for generation, they are equivalent up to a time-varying constant.}
It requires \FindSubspaceMethod, for which we can use \FindSubspaceBasis (\cref{code-FindSubspaceBasis}) and \FindSubspaceMask (\cref{code-FindSubspaceMask}) based on the properties of $\mathcal{Z}$ as discussed in the main text. More specifically, 

\paragraph*{\FindSubspaceBasis} 
We calculate the projection matrix (denoted as $\Pi_Z$) for the $Z$-subspace, from a span of $K$ prompts (after subtracting off the baseline)  (\cref{code-FindSubspaceBasis}).
In practical computations, we evaluate the $m \times K$ matrix $\triangle E$:
$$\triangle E \defas [\epsilon_\theta(y_{t}, t \given x_1) - \epsilon_\theta(y_{t}, t \given x_0), ..., \epsilon_\theta(y_{t}, t \given x_K) - \epsilon_\theta(y_{t}, t \given x_0)]$$
Then, the top $K_{\text{thres}}$ left singular vectors are selected as $Q$. Here, $K_{\text{thres}}$ denotes the least number of factors required to surpass a certain proportion of variance explained, denoted as $\text{thres}$. Consequently, we have $\Pi_z \leftarrow Q Q^T$.

\paragraph*{\FindSubspaceMask}
In this context, $\Pi_Z$ signifies a mask. This mask can be calculated from the score difference $\triangle \epsilon$ (refer to \cref{code-FindSubspaceMask}). As a practical measure, we may implement noise reduction techniques to fine-tune $\triangle \epsilon$. One approach is the application of a Gaussian blur to smooth out neighboring pixels.

\begin{algorithm}[H]
        \vspace{-2pt}
        \caption{Concept  Manipulation through Projection}
        \label{code-proj}
        \begin{algorithmic}[1]
        \STATE \textbf{Require} Diffusion model $\epsilon_\theta(y_t, t|x)$, guidance scale $w$, covariance matrix $\sigma_t^2 I$,\\
        empty prompt ``'', prompts: $x_\text{orig}, x_\text{new}$, \\
        prompts to build the $Z$ subspace: $\{x_i\}$,\\
        the function for finding the $Z$ subspace: \FindSubspaceMethod($\cdot$,$\cdot$)\\
        \STATE Initialize sample  $y_T \sim \mathcal{N}(\bm{0}, \bm{I})$  \\
        \FOR{$t = T, \ldots, 1$}
            \STATE $\epsilon_\text{empty} \gets \epsilon_\theta(y_{t}, t \given \text{``''})$ \small{\color{gray}\# unconditional score}\\
            \STATE $\epsilon_\text{orig},\epsilon_\text{new} \gets \epsilon_\theta(y_{t}, t \given x_\text{orig}),\epsilon_\theta(y_{t}, t \given x_\text{new})$ \small{\color{gray}\# conditional scores}\\
            \STATE $\Pi_Z \gets \text{\FindSubspaceMethod}(y_t,\{x_i\})$ \small{\color{gray}\# find the projection matrix}\\
            \STATE $\epsilon_{\text{cond}} \gets (\mathbb{I} - \Pi_Z)\epsilon_\text{orig} + \Pi_Z \epsilon_\text{new}$ \small{\color{gray}\# concept projection}\\
            \STATE $\epsilon \gets \epsilon_0 + w(\epsilon_{\text{cond}} - \epsilon_0)$
            \small{\color{gray}\# apply classifier-free guidance} \\
            \STATE $y_{t-1} \sim \mathcal{N} \Bigl(y_t - \epsilon, \sigma_t^2 I \Bigl)$ 
        \ENDFOR \\
        \end{algorithmic}
\end{algorithm}

\begin{algorithm}[H]
    \caption{\FindSubspaceBasis}
    \label{code-FindSubspaceBasis}
    \begin{algorithmic}[1]  %
    \REQUIRE $y_t \in \mathbb{R}^m$, prompts $\{x_k\}_{k = 0}^{K}$
    \STATE $\hat{\repSpace_Z}(y) \leftarrow \text{span}(\{\epsilon_\theta(y_{t}, t \given x_k) - \epsilon_\theta(y_{t}, t \given x_0)\}_{k = 1}^K)$
    \STATE Determine $\Pi_Z$ as the projection matrix onto $\hat{\repSpace_Z}(y)$ 
    \RETURN $\Pi_Z$
    \end{algorithmic}
\end{algorithm}

\begin{algorithm}
    \caption{\FindSubspaceMask}
    \label{code-FindSubspaceMask}
    \begin{algorithmic}[1]
    \REQUIRE $y_t \in \mathbb{R}^m$, a pair of prompts $(x_1, x_2)$
    \STATE $\triangle \epsilon \leftarrow \epsilon_\theta(y_{t}, t \given x_1) - \epsilon_\theta(y_{t}, t \given x_2)$
    \FOR{$i = 1$ to $m$}
        \STATE $m_i \leftarrow \triangle \epsilon_i \neq 0 ? 1 : 0$
    \ENDFOR
    \STATE $\Pi_Z \leftarrow \text{diag}(m_1, m_2, \ldots, m_m)$
    \RETURN $\Pi_Z$
    \end{algorithmic}
\end{algorithm}

\newpage
\section{Proofs}\label{sec:proof}

\sepImpliesDisentangled*
\begin{proof}
    By assumption in \Cref{def:causal_separability}, we have
        \begin{align*}
            p(y\given z, w) & = p(y_\mathcal{Z}, y_\mathcal{W}, \xi(y) \given z, w) \left|\text{det}\left(\frac{\partial g}{\partial y}\right)\right|\\
            & = p(y_\mathcal{Z}\given z)p(y_\mathcal{W}\given w) p(\xi(y)) \left|\text{det}\left(\frac{\partial g}{\partial y}\right)\right|
        \end{align*}
        Therefore, 
        \begin{align*}
        p[Q](y) =& p[Q_Z\times Q_W](y)\\
        =& p_Z[Q_Z](y) p[Q_W](y) p(\xi(y)) \left|\text{det}(\frac{\partial g}{\partial y})\right|,
        \end{align*}
        where $p_Z[Q_Z](y)=\int p(y_\mathcal{Z}\given z)Q_Z(z)\intd{z}$ and $p_W[Q_W](y)=\int p(y_\mathcal{W}\given z)Q_W(w)\intd{w}$.
        
        Then, taking the log-derivative with respect to $y$, we get its score function as follows:
        \begin{equation}
        \label{eq:score_arith_separable}
            s[Q_Z\times Q_W](y) = s_Z[Q_Z](y) + s_W[Q_W](y)  + s_0(y)
        \end{equation}
        where $s_Z(y)$ and $s_W(y)$ are $ p_Z[Q_Z](y)$'s and $p_W[Q_W](y)$'s score functions, and $s_0(y) \defas \grad_y \log \left(p(\xi(y)) \left|\text{det}(\frac{\partial g}{\partial y})\right|\right)$. 
        So the centered-score is 
        \begin{equation}
            \label{eq:centered_score_arith_separable}
            \bar{s}[Q_Z\times Q_W](y) = (s_Z[Q_Z](y) - s_Z[Q_Z^0](y)) + (s_W[Q_W](y) - s_W[Q_W^0](y))
        \end{equation}
        where $Q_Z^0$ and $Q_W^0$ are the marginal distributions of $Z$ and $W$ of the baseline $Q_0$.
        Then, we can use the fact that
        \begin{align*}
            \repSpace_Z &= \text{span}(\{\bar{s}_Z[Q_Z] - \bar{s}_Z[Q_Z^0]: Q_Z \text{ a concept distribution}\}) \\
            &= \text{span}(\{s_Z[Q_Z] - s_Z[Q_Z^0]: Q_Z \text{ a concept distribution}\})\\
            \repSpace_W &=\text{span}(\{\bar{s}_W[Q_W] - \bar{s}_W[Q_W^0]: Q_W \text{ a concept distribution}\})\\
            &=\text{span}(\{s_W[Q_W] - s_W[Q_W^0]: Q_W \text{ a concept distribution}\})
        \end{align*}
        Consequently, the claim follows. 
    \end{proof}

\repspace*
\begin{proof}
    By \CausalSep we can easily get the $\repSpace_Z(y)$ in \cref{prop:repspace2} is the same as:
    $$\repSpace_Z(y) =\text{span}(\{s_Z[Q_Z](y) - s_Z[Q_Z^0](y)\}: Q_Z \text{ a concept distribution})$$
    The only thing left to show is that $\repSpace_Z(y)$ remains the same for whatever choice of baseline $Q_Z^0$. 
    But this is immediate: $\text{span}(\{s_Z[Q_Z](y) - s_Z[Q_Z^0](y): Q_Z \text{ a concept distribution}\}) = \text{span}(\{s_Z[Q_Z](y) - s_Z[Q_Z^1](y): Q_Z \text{ a concept distribution}\})$ for any two baselines $Q_Z^0$ and $Q_Z^1$.
\end{proof}

\categoricalRank*
\begin{proof}
We denote the possible values that $\mathcal{Z}$ can take as $\{z_0,z_1,\dots,z_{L-1}\}$. Let $\delta_{z_i}:=\delta_{z_i}(z)$ represent the delta function in the $\mathcal{Z}$-subspace, which is infinite at $z_i$ and zero at all other points. For any distribution $Q_Z$ over $Z$ and any $y \in \mathbb{R}^m$, we can express $s_Z[Q_Z](y)$ as a linear combination of $s_z[\delta_{z_i}]$ in the following form:
$$s_Z[Q_Z](y) = \sum_{l=0}^{L-1} \pi_l(y) s_Z[\delta_{z_l}](y)$$
Here, $\sum_{l=0}^{L-1} \pi_l(y) = 1$. Consider a baseline concept distribution $Q_Z^0$ and its corresponding $Z$-related score $s_Z[Q_Z^0](y) = \sum_{l=0}^{L-1} c_l(y) s_Z[\delta_{z_l}](y)$. We can then express the difference $s_Z[Q_Z](y) - s_Z[Q_Z^0](y)$ as:
$$s_Z[Q_Z](y) - s_Z[Q_Z^0](y) = \sum_{l=1}^{L-1} \omega_l(y) (s_z[\delta_{z_l}](y) - s_z[\delta_{z_0}](y)),$$
where $\omega_l(y)=\pi_l(y)-c_l(y)$ for $l=1,\dots,L-1$. Consequently, we can observe that $\repSpace_Z(y) \subset \text{span}(\left\{s_z[\delta_{z_l}](y) - s_z[\delta_{z_0}](y)\right\}_{l = 1}^{L-1})$, which implies that $\text{dim}(\repSpace_Z(y)) \leq L - 1$.
\end{proof}

\composedCategoricalRank*

\begin{proof}
By assuming that $p(y_\mathcal{Z}\given z)=\prod_{k=1}^Kp(y_{\mathcal{Z}_k}\given z_k)$, we can easily derive the following result for any concept distribution $Q_Z$ over $Z$:
$$s_Z[Q_Z](y) = \sum_{k=1}^K s_{Z_k}[Q_{Z_k}](y),$$
where $Q_{Z_k}$ represents the concept distribution of $\mathcal{Z}_k$ for each $k$, and 
$$s_{Z_k}[Q_{Z_k}](y)=\grad_y \log \left(\int p(y_{\mathcal{Z}_k}\given z_k)Q_{Z_k}(z_k)\intd{z_k}\right).$$ 
Recall that
$$\repSpace_Z(y) =\text{span}(\{s_Z[Q_Z](y) - s_Z[Q_Z^0](y)\}: Q_Z \text{ is a concept distribution}),$$ 
where $Q_Z^0$ is a baseline. Importantly, it should be noted that $\repSpace_Z(y)$ is unique regardless of the choice of $Q_Z^0$ as per \cref{prop:repspace2}.

Let $Q_{Z_k}^0$ denote the $\mathcal{Z}_k$-related part of $Q_Z^0$ for $k=1,\dots,K$. We define 
$$\repSpace_{Z_k}(y) \defas \text{span}(\{s_{Z_k}[Q_{Z_k}](y) - s_{Z_k}[Q_{Z_k}^0](y)\}).$$
Then, we can state that:
$$\repSpace_Z(y) \subset \sum_{k=1}^K \repSpace_{Z_k}(y).$$
Based on \cref{prop:categoricalRank}, it follows that $\text{dim}(\repSpace_{Z_k}(y)) \leq L_k - 1$ for each $k$. Hence, we can conclude that:
$$\text{dim}(\repSpace_Z(y)) \leq \sum_{k=1}^K (L_k - 1).$$
\end{proof}

\section{Experiment Details and More Figures}\label{sec:experiment_details}

\subsection{Concept projection for Dreambooth (\Cref{fig:sks_toy_eiffel})}
\label{sec:dreambooth_details}

First, we  fine-tune the diffusion model using Dreambooth, applying a learning rate of $5e^{-6}$ and setting the number of steps to 800. While there are configurations that could yield a less overfitted model, we intentionally opt for these parameters to generate an overfitted model. Our aim is to verify if it's possible to disentangle the overfitted model by using concept manipulation via projection.  

To generate images depicting \ConceptValue{a sks toy in front of the Eiffel Tower}, we utilize our Dreambooth fine-tuned diffusion model together with the original pretrained Stable Diffusion model. Only for the new prompt, $x_\text{new}=\text{a sks toy''}$, we use the score function from the Dreambooth fine-tuned model. All other prompts are plugged into the score functions from the original pretrained Stable Diffusion model. To create the desired images, we construct a projector using a pair of prompts: $(x_1,x_2)=(\text{``a toy''},\text{``a soccer ball''})$. The mask, computed using \cref{code-FindSubspaceMask} with the threshold$=0.1$, helps identify specific areas corresponding to the location of the subject. Then, we use the Dreambooth score function $s_\text{dreambooth}(\text{a sks toy''})$ to guide the generation process within the masked region (areas with value 1), while using $s(\text{a toy in front of the Eiffel Tower''})$ to guide the generation outside the mask (areas with value 0).

To ensure image fidelity, we exclusively employ the score function $s_\text{dreambooth}(\text{``a sks toy''})$ for guiding the denoising process for the last 6\% of the denoising steps.

It is important to note that due to severe overfitting issues with the fine-tuned model, there is no significant difference between using either the prompt ``a sks toy'' or ``a sks toy in front of the Eiffel Tower'' for the fine-tuned model. Also, due to the same reason, we apply the original pretrained diffusion model for all score functions except for the sks toy related one.

\subsection{The mathematician example (\Cref{fig:mathematician_with_different_styles})}
Our starting point is an original prompt $x_\text{orig} = \text{``a portrait of a mathematician''}$. Our objective is to modify the \ConceptName{sex} and \ConceptName{style} using concept projection:

To adjust \ConceptName{sex}, we formulate a corresponding direction using a pair of prompts $(x_1, x_2) = (\text{``a man''}, \text{``a woman''})$. Subsequently, we set $x_\text{new} = \text{``a person''}$.

To alter the \ConceptName{style}, we set $x_\text{new} = \text{``a portrait of a mathematician, in Fauvism style''}$. We define the concept subspace using prompts of the form ``a portrait of a mathematician in [$x_{\text{style}}$] style'', where $x_{\text{style}}$ takes value from a list of styles. During sampling, the original prompt is utilized in the first $20\%$ of timesteps to better retain the content.

The list of styles is generated by ChatGPT. They are: Art Deco, Minimalist, Baroque, Abstract Expressionist, Cubist, Fauvism, Impressionist, Steampunk, Neoclassical, Japanese Ukiyo-e, Surrealism, Memphis Design, Scandinavian, Bauhaus, Pop Art, Art Nouveau, Street Art, American West, Victorian Gothic, Futurism, Photorealistic, Mannerist, Flemish, Byzantine, Medieval, Romanesque, Trompe-l'œil, and Dutch Golden Age.

\subsection{Stress-test experiments (\Cref{fig:style_comparison})}
In this experiemnt, we deliberately chose a diverse array of artistic styles and contrasting content to challenge our model. The styles used are Baroque painting, traditional Chinese painting, Pop art, Gothic stained glass, the Pre-Raphaelite painting style, Victorian botanical illustration, and Japanese Edo period art. In terms of content, we use: a bustling train station, a nuclear power plant, an artificial intelligence lab, a jazz music concert, a modern dance festival, a modern minimalist house, and a contemporary office setting. The rationale behind this selection was to pair styles and content that would rarely co-occur in training data. This rarity poses a significant challenge for the model in generating realistic outputs, testing its capabilities and adaptability to unconventional combinations.

We implement concept algebra the same way as in \Cref{fig:mathematician_with_different_styles} example. For concept composition, we use the software in \cite{liu2022compositional}. 

In \cref{fig:concept_algebra-not-preferred} we show some examples where concept algebra is not preferred. 

\begin{figure}[!htb]
    \centering
    \includegraphics[width=\linewidth]{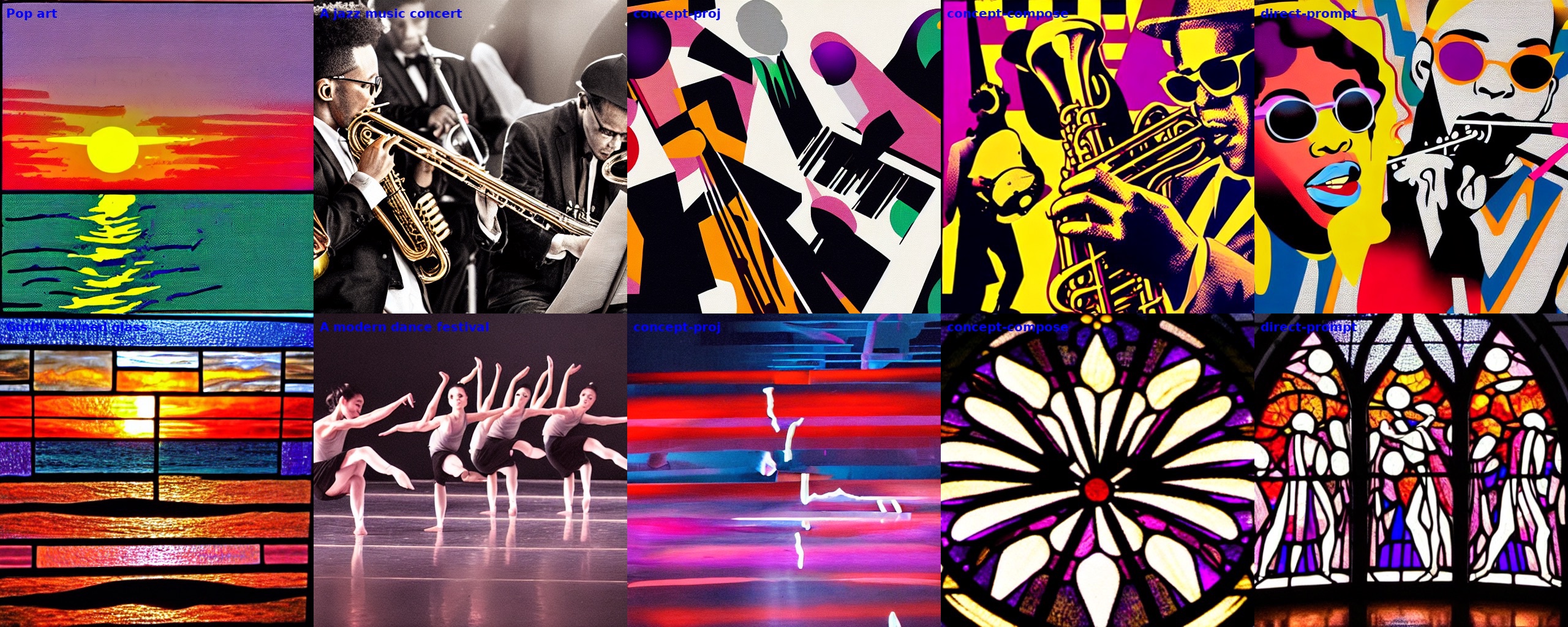} %
    \caption{Some examples where concept algebra is not preferred. Each row has style-reference (leftmost), a content-reference (2nd from left), and left-to-right (here, but randomly permuted in the survey) images generated from concept-algebra, composition and direct prompting.}
    \label{fig:concept_algebra-not-preferred} %
\end{figure}

\subsection{The nurse example (\Cref{fig:portrait_of_nurse})}

We initiate the process with an original prompt, $x_\text{orig} = \text{``a portrait of a nurse''}$. Our goal is to perform concept projection to manipulate the \ConceptName{sex} attribute. Similar to the mathematician example, we define the \ConceptName{sex} direction using a pair of prompts: $(x_1, x_2) = (\text{``a man''}, \text{``a woman''})$. However, instead of setting the distribution of \ConceptName{sex} as one of the delta functions or a fair one corresponding to the neutral prompt ``a person'', we wish to see what the concept's arithmetic average will define. Specifically, we take the \ConceptName{sex} direction of the average of \ConceptValue{a female nurse} and \ConceptValue{a male nurse}, calculated as $\frac{1}{2} s(\text{a female nurse''})+ \frac{1}{2} s(\text{a male nurse''})$. It turns out the arithmetic mean realize the interpolation between two extremal sex points in the \ConceptName{sex} subspace, and the score function after concept projection returns images of androgynous nurses.

\section{Additional experiments}\label{sec:additional_experiments}
\paragraph{Concept Algebra beats negative prompting in simple tasks}
Negative prompting, aims to eliminate target concept expressions by subtracting relevant scores. 
Unlike concept algebra and similar to concept composition, this method does not confine manipulations to specific subspaces. 
As predicted, we see negative prompting inadvertently modify off-target concepts tied to the primary concept, whereas concept algebra succeeds, as in . 

\begin{figure}[!htb]
    \centering
    \includegraphics[width=\linewidth]{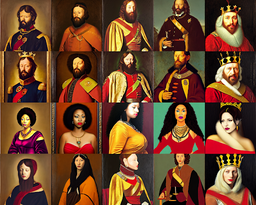} %
    \caption{Concept-algebra can succeed when negative prompting fails. 
    Images (matched on random seed) from top to bottom: direct prompting “a portrait of a king”, negative prompting w/ “male”, negative prompting w/ “male” but using concept projection, concept algebra with prompt “female”. Note negative prompting does not remove the maleness of the output.}
    \label{fig:concept_algebra} %
\end{figure}

\paragraph{Concept algebra fails when causal separability (\Cref{def:causal_separability}) is violated} We show one concrete example of failures.

\Cref{fig:nurse-deer-examples} shows that we are unable to transfer the gender of the nurse when we calculate the score function of \ConceptValue{a male nurse} by 
$(\mathbb{I} - \text{proj}_Z)s[\text{``a portrait of a nurse''}] + \text{proj}_Z s[\text{``a man''}]$ where $\text{proj}_Z$ is computed by $s[\text{``a buck on the grass''}] - s[\text{``a doe on the grass''}]$.
The target concept $Z$ and $W$ are \ConceptName{sex} $\in\{\ConceptValue{male, female}\}$ and \ConceptName{species}$\in\{\ConceptValue{human, deer}\}$. It's obvious that the \ConceptName{sex} and \ConceptName{species} have an interaction effect on the image $Y$ --- different species induce different sexual characteristics.

\begin{figure}[ht]
    \centering
    \begin{subfigure}[b]{\linewidth}
        \centering
        \includegraphics[width=\linewidth]{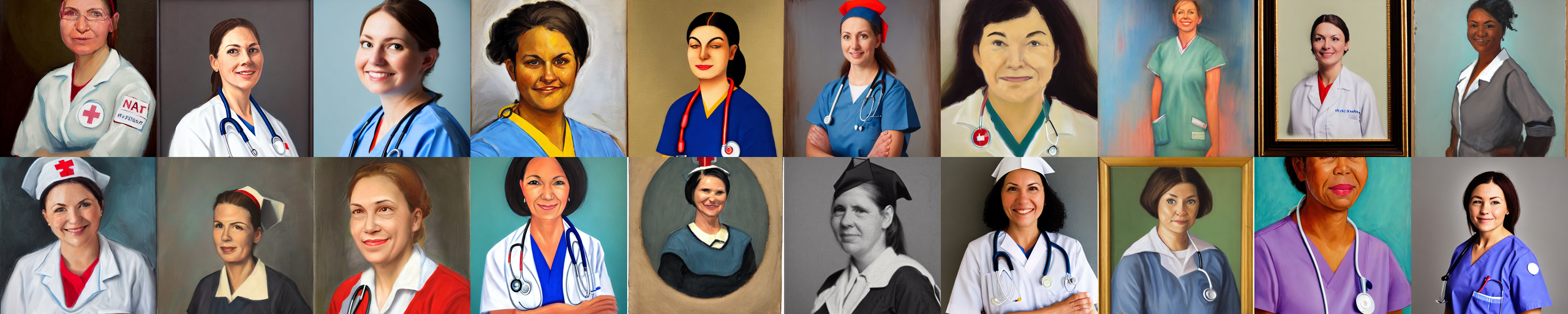}
        \caption{$s[\text{``a portrait of a nurse''}]$}
        \label{fig:nurse_failure_p0}
    \end{subfigure}
    \hfill
    \begin{subfigure}[b]{\linewidth}
        \centering
        \includegraphics[width=\linewidth]{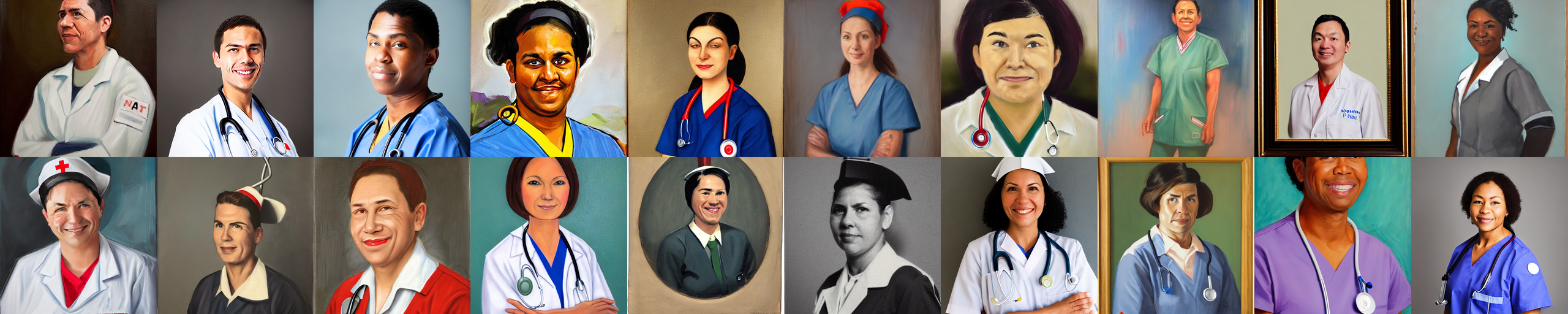}
        \caption{$(\mathbb{I} - \text{proj}_Z)s[\text{``a portrait of a nurse''}] + \text{proj}_Z s[\text{``a man''}]$ where $\text{proj}_Z$ is computed by $s[\text{``a buck on the grass''}] - s[\text{``a doe on the grass''}]$}
        \label{fig:nurse_failure_pj}
    \end{subfigure}
    \hfill
    \begin{subfigure}[b]{\linewidth}
        \centering
        \includegraphics[width=\linewidth]{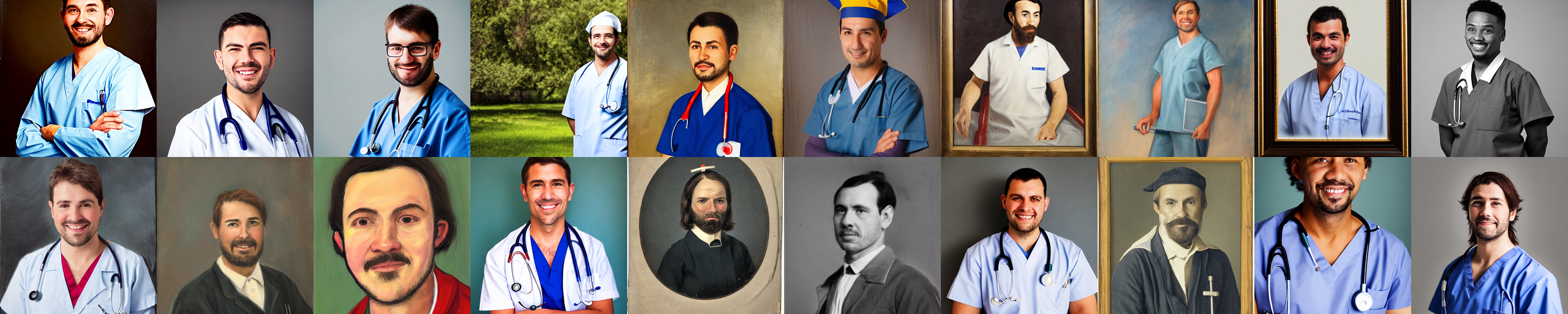}
        \caption{$(\mathbb{I} - \text{proj}_Z)s[\text{``a portrait of a nurse''}] + \text{proj}_Z s[\text{``a man''}]$ where $\text{proj}_Z$ is computed by $s[\text{``a man''}] - s[\text{``a woman''}]$}
        \label{fig:nurse_success_pj}
    \end{subfigure}
    \caption{Necessity of Assumptions: the validity of concept algebra depends on causal separability}
    \label{fig:nurse-deer-examples}
\end{figure}

\end{document}